\documentclass[opre,sglanonrev]{informs4}  % Operations Research

\usepackage{multirow}
\usepackage{booktabs}
\usepackage[linesnumbered,ruled,vlined]{algorithm2e}
\usepackage{placeins}
\definecolor{lightgray}{gray}{0.85}

\OneAndAHalfSpacedXI
\EquationsNumberedThrough    % Default: (1), (2), ...
\TheoremsNumberedThrough     % Preferred (Theorem 1, Lemma 1, Theorem 2)
\ECRepeatTheorems

\newcommand{\Reg}{\operatorname{Reg}}
\definecolor{ZHblue}{RGB}{0,76,153}
\newcommand{\ZH}[1]{\textcolor{black}{#1}}


\begin{document}
%%%%%%%%%%%%%%%%

\RUNAUTHOR{Author}
\RUNTITLE{Non-stationary Linear Bandits via Misspecification Reductions}

\TITLE{Dynamic Regret for Non-Stationary Linear Bandits via Misspecification Reductions}

% \ARTICLEAUTHORS{%
% \AUTHOR{Author Name}
% \AFF{Affiliation, \EMAIL{email@example.com}}
% }
\ARTICLEAUTHORS{%
\AUTHOR{Zihao Hu$^{1,3}$, Yuan Yao$^1$, Jiheng Zhang$^{1,3}$, and Zhengyuan Zhou$^2$}
 \AFF{Department of Mathematics, The Hong Kong University of Science and Technology$^1$ \\
 Stern School of Business, New York University$^2$\\
 Department of IEDA, The Hong Kong University of Science and Technology$^3$\\ 
 \EMAIL{\{zihaohu, yuany, jiheng\}@ust.hk, zz26@stern.nyu.edu}}
}

% \ABSTRACT{%
% We study non-stationary stochastic linear and generalized linear bandits under a path-length variation budget. We develop a batchwise CORRAL construction \citep{agarwal2017corralling} that aggregates OFUL-type base learners over a radius-guess grid. The main technical idea is local-norm bias control plus adaptive radius-guess selection: the analysis keeps the non-stationary drift bias in the local geometry of the least-squares estimator, and the algorithm corrals over the corresponding unknown drift scale. When the path length is known, the linear-bandit algorithm achieves dynamic regret $\widetilde{\mathcal O}(d\sqrt T+d^{5/6}T^{2/3}(LP_T)^{1/3})$. The generalized linear extension obtains the same scaling up to link-function constants. We also describe a Bandit-over-Bandit extension that removes prior knowledge of the path length at the standard parameter-free tuning cost.
% }%

\ABSTRACT{%
Many online decision-making problems involve both round-specific feasible
actions and drifting reward models: eligible ad impressions, feasible prices,
and available treatments can change over time, while user preferences,
demand curves, and patient responses may evolve. Motivated by these
applications, we study non-stationary linear bandits with round-specific
feasible decision sets.
Existing methods that obtain
the optimal \(\widetilde O(T^{2/3}P_T^{1/3})\) dependence, where \(P_T\)
is the path length of the reward-parameter sequence, impose an
orthogonal-structure assumption on round-specific decision sets, which can be
restrictive in contextual applications. We address this gap through a
unified misspecification-reduction viewpoint: after partitioning the horizon
into blocks, we relate each block's dynamic regret to regret against a
fixed-parameter linear bandit benchmark, with the within-block parameter
drift entering as bounded misspecification.
Restarting algorithms with misspecification-dependent regret guarantees then
yields the optimal \(T^{2/3}P_T^{1/3}\) dynamic-regret dependence for both
linear bandits with general compact decision sets and \(K\)-armed contextual
linear bandits.
}%

\KEYWORDS{non-stationary online learning, linear bandits, dynamic regret}

\maketitle
%%%%%%%%%%%%%%%%%%%%%%%%%%%%%%%%%%%%%%%%%%%%%%%%%%%%%%%%%%%%%%%%%%%%%%
\section{Introduction}
Many online decision-making problems, such as online ad display and
contextual treatment assignment, involve choosing among actions represented by feature vectors that
can include both the current context and action-specific information. In these
applications, feasible actions are naturally round-specific: they depend on
the current user or patient and, in ad display, on the ad impressions
currently available. Linear bandits
\citep{abbasi2011improved,chu2011contextual}
provide a standard model for such problems: each feasible decision is
represented by a feature vector, and its expected reward is modeled as the
inner product between this feature vector and an unknown parameter vector.
The standard performance measure for a policy is regret, which compares
the policy's cumulative reward with that of a benchmark. In the classical
stationary model, the unknown parameter is fixed, and this benchmark selects
the best feasible decision under the fixed parameter at each round.

In many applications, however, the relationship between actions and expected
rewards may drift over time: user preferences may evolve, market conditions
may shift, and patient responses to treatments may change. This concern has motivated a broad
literature on non-stationary stochastic optimization and non-stationary
bandit learning
\citep{besbes2011minimax,besbes2015nonstationary,keskin2017chasing,
chen2019nonstationary,cheung2022hedging,wang2025adaptivity}. The non-stationary linear bandit
model captures such drift by allowing the reward parameter to change across
rounds. Formally, consider an online decision-making problem over \(T\) rounds: at
each round \(t\), the learner observes a feasible decision set
\(\mathcal A_t\subseteq\mathbb R^d\), chooses an action
\(a_t\in\mathcal A_t\), and receives a reward with conditional mean
\(\langle a_t,\theta_t\rangle\), where \(\theta_t\) is the current reward
parameter. Following \citet{cheung2022hedging}, the non-stationarity of a
problem instance is measured by the path length of the reward-parameter
sequence, \(P_T:=\sum_{t=1}^{T-1}\|\theta_{t+1}-\theta_t\|_2\).
For a policy \(\pi\), with \(a_t\) denoting its round-\(t\) action, define
the dynamic regret
\[
    \Reg_T(\pi)
    :=
    \mathbb E\sum_{t=1}^T
    \left[
        \max_{a\in\mathcal A_t}\langle a,\theta_t\rangle
        -
        \langle a_t,\theta_t\rangle
    \right],
\]
which compares the learner's expected reward at each round with that of the
best feasible action under the current parameter.  The goal is to achieve small dynamic regret when \(P_T\)
is moderate.
Several works study non-stationary linear bandits under path-length constraint
\citep{russac2019weighted,
    cheung2022hedging,zhao2021simple,wang2025revisiting}.
However, existing rate-optimal approaches for non-stationary linear bandits
do not fully cover the case of round-specific feasible decision sets.
\citet{cheung2022hedging} obtain the
\(\widetilde O(T^{2/3}P_T^{1/3})\) upper bound in this regime by requiring
all feasible actions to lie on a fixed set of orthogonal directions, and
establish the corresponding \(\Omega(T^{2/3}P_T^{1/3})\) lower bound. The
orthogonal-direction assumption can be restrictive in contextual
applications, where users, ad opportunities, and patient-treatment pairs are
often described by context-dependent attributes, leading to feature vectors
with shared structure rather than scalar multiples of fixed orthogonal
basis directions.
This restriction leaves open the problem of achieving the optimal
\(T^{2/3}P_T^{1/3}\) dynamic-regret dependence for non-stationary linear
bandits with general round-specific feasible decision sets
\citep{zhao2021simple,wang2025revisiting}. 

\noindent\textbf{Our contributions.}
%  Our contribution is a misspecification-reduction framework that obtains the
% optimal \(T^{2/3}P_T^{1/3}\) dependence without the orthogonal-direction
% restriction.
We show that the optimal \(T^{2/3}P_T^{1/3}\) dynamic-regret dependence can
be achieved for round-specific decision sets without the orthogonal-direction
assumption. We prove this in two settings: general compact decision sets
and \(K\)-armed contextual linear bandits.

The key idea is a local misspecification reduction: parameter drift within
each block is treated as model misspecification, so the desired
dynamic-regret rate follows from blockwise regret guarantees that
\emph{adapt} to the misspecification level. The reduction is based on the
following blockwise bound. On a block
\(\mathcal I=\{\tau,\ldots,\tau+n-1\}\), using \(\theta_\tau\) as the block
comparator gives
\[
    \sup_{t\in\mathcal I}\sup_{a\in\mathcal A_t}
    |\langle a,\theta_t-\theta_\tau\rangle|
    \le
    L P_{\mathcal I},
\]
where \(L\) upper bounds the norms of feasible actions and
\(P_{\mathcal I}\) denotes the path length of the parameter sequence
restricted to block
\(\mathcal I\).  In general, if a block
comparator \(\bar\theta\) has uniform error at most \(\varepsilon\) on
\(\mathcal I\), dynamic regret on the block is bounded by two terms: regret
against the blockwise fixed-parameter benchmark induced by \(\bar\theta\),
and an approximation term \(O(n\varepsilon)\). The first term is precisely
the regret of a misspecified linear bandit on the block, with
misspecification level \(\varepsilon\). Consequently, a guarantee of
order \(\widetilde O(\sqrt n+n\varepsilon)\) yields the block dynamic-regret
bound \(\widetilde O(\sqrt n+nLP_{\mathcal I})\) after taking
\(\varepsilon=LP_{\mathcal I}\). Restarting on blocks of length
\(\Delta\) and summing over blocks gives the tradeoff
\(\widetilde O(T/\sqrt\Delta+\Delta P_T)\). Choosing the optimal
\(\Delta\) gives the
\(\widetilde O(T^{2/3}P_T^{1/3})\) dependence.

% \newpage

We obtain the following guarantees in two settings:
\begin{itemize}
    \item \textbf{General compact decision sets.}
    For linear bandits with general compact, adaptive non-anticipating
    decision sets and an oblivious parameter path,
    Theorem~\ref{thm:known-path-dynamic-regret} shows that a restarted
    CORRAL aggregation of SquareCB.Lin+ bases
    \citep{foster2020adapting} achieves the expected dynamic-regret bound
    \[
        \widetilde O\!\left(
        d\sqrt T
        +
        L^{1/3}d^{5/6}T^{2/3}P_T^{1/3}
        \right).
    \]
    Thus the method attains the optimal
    \(T^{2/3}P_T^{1/3}\) dynamic-regret dependence, matching the
    lower-bound dependence of \citet{cheung2022hedging} while allowing
    general round-specific compact action sets beyond the
    orthogonal-direction structure.

    \item \textbf{\(K\)-armed contextual linear bandits.}
    For \(K\)-armed contextual linear bandits under an oblivious adversary,
    with \(\Lambda_K:=1+\log K\),
    Theorem~\ref{thm:known-path-karmed-dynamic-regret} shows that restarting
    SupLinUCB \citep{chu2011contextual} achieves the expected dynamic-regret bound
    \[
        \widetilde O\!\left(
        \sqrt{dT}\,\Lambda_K
        +
        \sqrt d\,\Lambda_K^{5/6}T^{2/3}P_T^{1/3}
        \right).
    \]
    This again matches the optimal dependence on \(T\) and \(P_T\), as
    certified by Proposition~\ref{prop:finite-k-lower-bound}.
\end{itemize}

% The general compact-set result requires an additional conditional argument.
Unlike the original SquareCB.Lin+ misspecification guarantee of
\citet{foster2020adapting}, which considers oblivious sequences, our setting
allows adaptive non-anticipating decision sets. This requires an additional
conditional argument. On each block, the comparator
and misspecification radius are fixed before the within-block randomization.
Consequently, the misspecification-dependent terms remain predictable, and the
SquareCB.Lin+ base and CORRAL master guarantees can be invoked conditionally.
If \(P_T\) is unknown, a Bandit-over-Bandit layer can remove this tuning at
the usual parameter-free cost
\citep{cheung2022hedging,zhao2021simple}.

\noindent\textbf{Related work and positioning.}
Restarting, sliding-window, and weighted-estimation methods are common tools
for non-stationary bandits
\citep{besbes2015nonstationary,russac2019weighted,cheung2022hedging,
zhao2021simple,wang2025revisiting}.
For non-stationary linear bandits, recent work identifies a gap between
simple forgetting analyses and the \(T^{2/3}P_T^{1/3}\) lower bound when
feasible sets are round-specific \citep{zhao2021simple}.
Our work addresses this gap: round-specific feasible sets can be handled
directly by reducing within-block
parameter drift to bounded linear misspecification.
The proof is built on this connection between non-stationary linear
bandits and misspecified linear bandits, using misspecification-adaptive
linear-bandit guarantees
\citep{foster2020adapting,takemura2021parameter} to remove the
orthogonal-structure assumption on round-specific action sets.

\noindent\textbf{Organization.} The rest of the manuscript is organized as follows. Section~\ref{sec:problem}
introduces the model and block notation. Section~\ref{sec:adaptive-linear}
proves the dynamic-regret guarantee for general linear bandits with
adaptive non-anticipating decision sets. Section~\ref{sec:finite-arm}
develops restarted SupLinUCB and its dynamic-regret guarantee for
\(K\)-armed contextual linear bandits. Section \ref{sec:conc} concludes this manuscript.

\section{Problem Setting}
\label{sec:problem}

We consider a non-stationary contextual linear bandit over
\(T\) rounds.  At each round \(t\), the learner observes a nonempty
feasible decision set \(\mathcal A_t\subseteq\mathbb R^d\).  We identify
each feasible action with its feature vector.
The learner selects an action \(a_t\in\mathcal A_t\) and observes a scalar
reward \(r_t=r_t(a_t)\).
There is an unknown parameter sequence
\(\theta_1,\ldots,\theta_T\in\mathbb R^d\), and the conditional mean
reward is linear: \(\mu_t(a)=\langle a,\theta_t\rangle\) for
\(a\in\mathcal A_t\).
When a norm bound \(S\) on the parameters is imposed, we write
\(\Theta:=\{\theta\in\mathbb R^d:\|\theta\|_2\le S\}\); in the \(K\)-armed
setting below we use the same notation with \(S=1\).

\noindent\emph{Histories.}
Let $\mathcal H_{t-1}$ denote the interaction history through the end of
round $t-1$.  When adaptive decision sets are allowed, we also write
$\mathcal G_t$ for the round-$t$ \(\sigma\)-field after the current decision
set and the learner's action distribution have been determined, but before
the action is sampled and before the reward noise is realized.

\begin{assumption}[Sub-Gaussian reward noise]
    \label{ass:subgaussian-noise}
    For the played action \(a_t\), write
    \(\xi_t:=r_t(a_t)-\mu_t(a_t)\). Conditional on the pre-reward
    information, including \(a_t\), the noise is mean zero and
    \(R\)-sub-Gaussian for a universal constant \(R\).
\end{assumption}

\begin{assumption}[Path length]
    \label{ass:path-length}
    The parameter sequence has path length
    \(P_T:=\sum_{t=1}^{T-1}\|\theta_{t+1}-\theta_t\|_2\).
\end{assumption}

\noindent\emph{Dynamic regret.}
For a policy \(\pi\), let \(a_t\) be the action selected by \(\pi\) at round
\(t\), and let \(a_t^\star\) be an optimal action at round \(t\), i.e.,
\(a_t^\star\in\argmax_{a\in\mathcal A_t}\mu_t(a)
=\argmax_{a\in\mathcal A_t}\langle a,\theta_t\rangle\).
The expected dynamic regret of \(\pi\) is
\[
    \Reg_T(\pi)
    :=
    \mathbb E\left[
        \sum_{t=1}^T
        \left(
            \langle a_t^\star,\theta_t\rangle
            -
            \langle a_t,\theta_t\rangle
        \right)
    \right].
\]
When the policy or algorithm is clear from context, we write \(\Reg_T\) for
\(\Reg_T(\pi)\).

\noindent\emph{\(K\)-armed contextual specialization.}
The \(K\)-armed contextual linear-bandit setting considered by
\citet{chu2011contextual} is the special case
\(\mathcal A_t=\{x_t(i):i\in[K]\}\).  The learner
chooses \(i_t\in[K]\), equivalently \(a_t=x_t(i_t)\), and
\(\mu_t(i):=\mu_t(x_t(i))\).

\noindent\emph{Block notation.}
For an interval \(\mathcal I=\{\tau,\ldots,\tau+n-1\}\), write
\(\mathbb E_\tau[\cdot]:=\mathbb E[\cdot\mid\mathcal H_{\tau-1}]\), where
\(\mathcal H_{\tau-1}\) is the history before the block starts.  The
dynamic regret on block \(\mathcal I\) is $\Reg(\mathcal I)
    :=
    \sum_{t\in\mathcal I}
    \left(
        \langle a_t^\star,\theta_t\rangle
        -
        \langle a_t,\theta_t\rangle
    \right)$.
% \[
%     \Reg(\mathcal I)
%     :=
%     \sum_{t\in\mathcal I}
%     \left(
%         \langle a_t^\star,\theta_t\rangle
%         -
%         \langle a_t,\theta_t\rangle
%     \right).
% \]

\begin{definition}
    \label{def:block}
For a block $\mathcal I=\{\tau,\ldots,\tau+n-1\}$ and a comparator
$\theta\in\Theta$, define
\[
    \varepsilon_{\mathcal I}(\theta)
    :=
    \sup_{t\in\mathcal I}\sup_{a\in\mathcal A_t}
    \left|
        \langle a,\theta_t\rangle
        -
        \langle a,\theta\rangle
    \right|.
\]
Definition \ref{def:block} makes precise the sense in which a non-stationary linear
bandit can be viewed locally as a misspecified stationary linear bandit. When the comparator is clear from context, we write
$\varepsilon_{\mathcal I}$.
\end{definition}      
\section{General Linear Bandits with Adaptive Non-Anticipating Decision Sets}
\label{sec:adaptive-linear}
In this section we consider the setting in which the parameter path
\((\theta_t)_{t=1}^T\) is fixed before interaction begins and is
independent of the learner's randomization and reward noise.  The decision
sets, however, may be adaptive but non-anticipating: at each round \(t\),
\(\mathcal A_t\) may depend on \(\mathcal H_{t-1}\), but it is fixed before
any current-round learner randomization is drawn and before the reward
noise is realized.  Equivalently, after \(\mathcal A_t\) and the learner's
action distribution have been determined, \(\mathcal G_t\) contains these
quantities but not the round-\(t\) action sample.
The technical point, relative to the oblivious-sequence analysis in
\citet{foster2020adapting}, is that our block proof is written
conditionally on this round-\(t\) \(\sigma\)-field, which allows adaptively
generated decision sets.

\ZH{Allowing such adaptivity is important in operational problems where the
feasible actions at a round are shaped by past decisions, resource
states, and observed feedback. Online ad display provides one example. For a
given impression, ad eligibility may depend on past serving decisions through
frequency caps, exposure limits, campaign pacing or delivery constraints, and
advertiser-side resource availability. It may also depend on feedback observed
before the current impression, such as past clicks, conversions, negative
feedback, or other engagement signals. Thus the model allows the feasible
action set to respond to the learner's previous actions and observations,
while requiring it to be fixed before the current reward realization.}

\begin{assumption}[Bounded rewards and parameters for the adaptive reduction]
    \label{ass:bounded-rewards}
    The action vectors and parameters
    satisfy \(\|a\|_2\le L\) for all \(a\in\mathcal A_t\) and
    \(\|\theta_t\|_2\le S\) for all \(t\in[T]\).  Moreover, for every
    round \(t\) and feasible action \(a\in\mathcal A_t\), the potential
    reward satisfies \(0\le r_t(a)\le 1\) a.s.
\end{assumption}

\noindent
Unbounded conditionally sub-Gaussian rewards can be clipped and rescaled,
incurring only standard logarithmic factors. Thus, we state the formal
results under bounded rewards.

\begin{assumption}[Fixed block comparator and fixed radius upper bound]
\label{ass:adaptive-block-comparator}
On block $\mathcal I=\{\tau,\ldots,\tau+n-1\}$, before the learner's
first within-block randomization, there exist quantities
$\theta_{\mathcal I}^\star\in\Theta$ and
$\bar\varepsilon_{\mathcal I}\ge0$, unknown to the learner but fixed, such
that \(\varepsilon_{\mathcal I}(\theta_{\mathcal I}^\star)
\le \bar\varepsilon_{\mathcal I}\) almost surely. In the path-length
application below, this assumption is satisfied by taking
\(\theta_{\mathcal I}^\star=\theta_\tau\) and
\(\bar\varepsilon_{\mathcal I}=L P_{\mathcal I}\), since the parameter path
is fixed before interaction begins.
\end{assumption}

\noindent\emph{Algorithmic idea.}
Following the restarting viewpoint of
\citet{besbes2015nonstationary}, the algorithm partitions the horizon into
blocks and restarts a CORRAL master
\citep{agarwal2017corralling,foster2020adapting} over SquareCB.Lin+ bases.
The CORRAL layer randomizes over base learners and competes with the best
base learner in hindsight, up to its aggregation cost. In our use of CORRAL,
the bases are indexed by a geometric grid of candidate misspecification
levels. On each block, the unknown parameter drift acts as an unknown
misspecification level, and the grid contains a candidate within a constant
factor of this level.

\begin{algorithm}[!htbp]
    \caption{Restarted misspecification-adaptive linear bandit}
    \label{alg:restart-misspec-adaptive}
    % \begin{algorithmic}[1]
    \KwIn{Horizon $T$, block length $\Delta$, dimension $d$, radius cap $B_\varepsilon=2LS$, square-loss regret bound $\operatorname{Reg}_{\rm sq}(\Delta)$}
    \KwOut{Actions $a_t$}
    
    Partition time into blocks $\mathcal I$ of length $\Delta$\;
    
    \ForEach{block $\mathcal I$}{
        Initialize a geometric grid
        $\{\varepsilon'_m\}_{m=1}^M\subseteq[1/|\mathcal I|,B_\varepsilon]$
        with endpoints $B_\varepsilon$ and $1/|\mathcal I|$, up to a factor two\;
        Initialize a hedged-FTRL/CORRAL-style master over $M$ base learners\;
        Initialize $M$ base learners, each a SquareCB.Lin+ instance\;
    
        \For{$t \in \mathcal I$}{
            Observe $\mathcal A_t$\;
            Master samples $M_t \sim q_t$\;
            Set
            $\rho_{t,M_t}=\frac{1}{\min_{\tau\le r\le t} q_{r,M_t}}$\;
            Base learner $M_t$ sets
            $\gamma_{t,M_t}=\min\left\{
            \frac{\sqrt d}{\varepsilon'_{M_t}},
            \sqrt{\frac{d\Delta}{\rho_{t,M_t}\operatorname{Reg}_{\rm sq}(\Delta)}}
            \right\}$\;
            Set oracle weight
            $w_t=\frac{\gamma_{t,M_t}}{q_{t,M_t}}$\;
            Query the weighted regression oracle of base $M_t$, implemented via the randomized weighted-update reduction \citep{foster2020adapting}, to obtain $\widehat\beta_{t,M_t}$\;
            Set
            $p_{t,M_t}\in \operatorname{logdet\text{-}barrier}
            \bigl(\widehat\beta_{t,M_t},\gamma_{t,M_t};\mathcal A_t\bigr)$\;
            Sample $a_t\sim p_{t,M_t}$\;
            Play $a_t$, observe reward $r_t$, and set $\ell_t=-r_t$\;
            Update the weighted oracle of base $M_t$ with $(w_t,a_t,\ell_t)$\;
            Update the master using the shifted observed loss $\ell_t+1=1-r_t$ of base $M_t$\;
        }
    }
    \end{algorithm}

We now describe the grid, oracle interface, and sampling rule used in
Algorithm~\ref{alg:restart-misspec-adaptive}.
On a block of length \(n\), let \(B_\varepsilon:=2LS\) and use a geometric
grid \(\{\varepsilon'_m\}_{m=1}^M\subseteq[1/n,B_\varepsilon]\), with
largest point \(B_\varepsilon\), smallest point \(1/n\), and common ratio
two. For base \(m\), set the exploration parameter
\[
    \gamma_{t,m}
    =
    \min\left\{
        \frac{\sqrt d}{\varepsilon'_m},
        \sqrt{\frac{dn}{\rho_{t,m}\Reg_{\rm sq}(n)}}
    \right\}.
\]
Although Algorithm~\ref{alg:restart-misspec-adaptive} invokes a weighted
square-loss oracle, the randomized reduction of
\citet{foster2020adapting} allows this interface to be implemented using
an unweighted online square-loss regression oracle. For the bounded
linear square-loss class considered here, the online Newton step of
\citet{hazan2007logarithmic} gives
\(\Reg_{\rm sq}(n)=\widetilde O(d)\).

For the SquareCB.Lin+ sampling rule, let \(\Delta(\mathcal A)\) be the
set of distributions on \(\mathcal A\), and for
\(p\in\Delta(\mathcal A)\) write
\(\bar a_p:=\mathbb E_{a\sim p}[a]\) and
\(H_p:=\mathbb E_{a\sim p}[aa^\top]\).  Given
\((\widehat\beta,\gamma,\mathcal A)\), the base chooses any minimizer of
\begin{equation}
\label{eq:logdet-barrier-definition}
\begin{aligned}
    \argmin_{p\in\Delta(\mathcal A)}
    \Bigl\{\langle \bar a_p,\widehat\beta\rangle-\gamma^{-1}\log\det(H_p-\bar a_p\bar a_p^\top)
    \Bigr\}.
\end{aligned}
\end{equation}
We write
\(\operatorname{logdet\text{-}barrier}(\widehat\beta,\gamma;\mathcal A)\)
for this set of minimizers.

At round \(t\), the master samples an index \(M_t\sim q_t\) and follows
base learner \(M_t\).  Let \(Z_{t,m}:=\mathbf 1\{M_t=m\}\).  On a block
\(\mathcal I=\{\tau,\ldots,\tau+n-1\}\), define
\(\rho_{t,m}:=1/\min_{\tau\le r\le t}q_{r,m}\) and
\(\rho_{\mathcal I,m}:=\max_{t\in\mathcal I}\rho_{t,m}\).  The master
outputs an interior distribution, so \(q_{t,m}>0\).  If base \(m\) is
selected, its oracle weight is \(w_{t,m}:=\gamma_{t,m}/q_{t,m}\).  Each
base \(m\) proposes an action distribution \(p_{t,m}\). The selected base
therefore proposes \(p_{t,M_t}\), and the learner samples
\(a_t\sim p_{t,M_t}\).

Lemma~\ref{thm:block-mis-adaptive} gives the block regret guarantee for
Algorithm~\ref{alg:restart-misspec-adaptive} under the fixed-radius
misspecification condition above.
\begin{lemma}
    \label{thm:block-mis-adaptive}
    Assume Assumptions~\ref{ass:subgaussian-noise},~\ref{ass:bounded-rewards}
    and~\ref{ass:adaptive-block-comparator} hold on block \(\mathcal I\).
    Run Algorithm~\ref{alg:restart-misspec-adaptive} freshly on
    \(\mathcal I\), using observed loss \(\ell_t=-r_t(a_t)\) and updating
    the master with the shifted loss \(\ell_t+1\). Assume the unweighted
    square-loss oracle has regret \(\Reg_{\rm sq}(n)\) on length-\(n\)
    adaptive non-anticipating sequences. Then
    \[
        \mathbb E_\tau[\Reg(\mathcal I)]
        \le
        \widetilde O\!\left(
            \sqrt{dn\,\Reg_{\rm sq}(n)}
            +
            \sqrt d\,n\,\bar\varepsilon_{\mathcal I}
        \right).
    \]
    In particular, online Newton step gives
    $\Reg_{\rm sq}(n)=\widetilde O(d)$, and hence
    \[
        \mathbb E_\tau[\Reg(\mathcal I)]
        \le
        \widetilde O\!\left(
            d\sqrt n
            +
            \sqrt d\,n\,\bar\varepsilon_{\mathcal I}
        \right).
    \]
    \end{lemma}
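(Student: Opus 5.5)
The plan is to split the block dynamic regret into a misspecified static-regret term and an approximation term, and then to re-derive, conditionally on the round-$t$ $\sigma$-field $\mathcal G_t$, the SquareCB.Lin+ misspecification bound and the CORRAL aggregation bound of \citet{foster2020adapting}. Fix the block $\mathcal I$. Write $\theta^\star:=\theta_{\mathcal I}^\star$ and $\bar\varepsilon:=\bar\varepsilon_{\mathcal I}$ as in Assumption~\ref{ass:adaptive-block-comparator}. Both are $\mathcal H_{\tau-1}$-measurable, and $\varepsilon_{\mathcal I}(\theta^\star)\le\bar\varepsilon$ holds almost surely. For each $t\in\mathcal I$, let $\tilde a_t\in\argmax_{a\in\mathcal A_t}\langle a,\theta^\star\rangle$. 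Then
\[
\langle a_t^\star,\theta_t\rangle-\langle a_t,\theta_t\rangle
\le \langle \tilde a_t,\theta^\star\rangle-\langle a_t,\theta^\star\rangle+2\bar\varepsilon ,
\]
since each swap of $\theta_t$ for $\theta^\star$ costs at most $\bar\varepsilon$ and $\langle a_t^\star,\theta^\star\rangle\le\langle\tilde a_t,\theta^\star\rangle$. Summing over the block gives $\Reg(\mathcal I)\le \Reg^{\theta^\star}(\mathcal I)+2n\bar\varepsilon$. Here $\Reg^{\theta^\star}$ is the regret of a stationary linear bandit with parameter $\theta^\star$ whose observed mean rewards are perturbed by $b_t(a):=\langle a,\theta_t-\theta^\star\rangle$, and $|b_t(a)|\le\bar\varepsilon$ uniformly. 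It therefore suffices to show $\mathbb E_\tau[\Reg^{\theta^\star}(\mathcal I)]=\widetilde O(\sqrt{dn\Reg_{\rm sq}(n)}+\sqrt d\,n\bar\varepsilon)$.

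\textbf{Per-base analysis.} Let $m^\star$ be the grid index with $\bar\varepsilon\le\varepsilon'_{m^\star}\le 2\bar\varepsilon$. If $\bar\varepsilon<1/n$, take the smallest grid point $1/n$; this adds only $O(\sqrt d)$. Note that $\bar\varepsilon\le 2LS=B_\varepsilon$ always holds, because $\|a\|_2\le L$ and the parameters lie in $\Theta$. Conditionally on $\mathcal G_t$, the log-det barrier distribution $p=p_{t,m}$ satisfies the per-round inequality of SquareCB.Lin+:
\[
\mathbb E_{a\sim p}\langle \tilde a_t-a,\theta^\star\rangle
\le \frac{C d}{\gamma}+\frac{\gamma}{4}\,\mathbb E_{a\sim p}\bigl(\langle a,\widehat\beta-\theta^\star\rangle\bigr)^2 .
\]
This inequality is deterministic given $(\mathcal A_t,\widehat\beta,\gamma)$, and all three quantities are $\mathcal G_t$-measurable, so it holds with no obliviousness assumption. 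I then sum over rounds at which base $m^\star$ is selected, reweighting by importance weights $Z_{t,m}/q_{t,m}$; this is exactly the reason the oracle weight is $w_t=\gamma_{t,m}/q_{t,m}$. The result is the weighted square-loss estimation error of base $m^\star$ against $\theta^\star$.

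\textbf{Oracle step.} The weighted randomized reduction, combined with the oracle regret $\Reg_{\rm sq}(n)$ on adaptive non-anticipating sequences, bounds that weighted estimation error by $\rho\,\Reg_{\rm sq}(n)$ plus a misspecification term of order $\sum_t w_t\bar\varepsilon^2$. The target observations have mean $\langle a,\theta^\star\rangle+b_t(a)$, so I expand the square loss around $\theta^\star$. The cross term with the noise is a martingale difference with respect to $\mathcal G_t$ together with the action draw, because $\theta^\star$ is fixed at $\tau$. The bias term costs at most $\bar\varepsilon^2$ per unit weight. Substituting $\gamma_{t,m^\star}=\min\{\sqrt d/\varepsilon'_{m^\star},\sqrt{dn/(\rho\Reg_{\rm sq}(n))}\}$ and balancing $d/\gamma$ against $\gamma\bar\varepsilon^2$ gives a base regret, as seen by the master, of the stable form
\[
\widetilde O\bigl(\sqrt{\rho_{\mathcal I,m^\star}\,dn\Reg_{\rm sq}(n)}+\sqrt d\,n\,\varepsilon'_{m^\star}\bigr).
\]

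\textbf{Master step and main obstacle.} Finally I apply the CORRAL guarantee with the log-barrier hedged-FTRL master on the shifted losses $1-r_t\in[0,1]$. For a base whose regret is $\sqrt{\rho}\,R_1+R_2$, this guarantee gives master regret $\widetilde O(\sqrt{Mn}+M/\eta+\eta R_1^2)+R_2$, the $R_2$ term being additive. Tuning $\eta$ and using $M=O(\log(nLS))$ gives $\widetilde O(\sqrt{dn\Reg_{\rm sq}(n)}+\sqrt d\,n\bar\varepsilon)$. Adding the $2n\bar\varepsilon$ approximation term and taking $\mathbb E_\tau$ completes the argument; the online Newton step bound $\Reg_{\rm sq}(n)=\widetilde O(d)$ then yields the second display. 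I expect the main obstacle to be the conditional validity of the stability argument. The $\rho_{t,m}$-dependent tuning and the importance-weighted oracle analysis must be checked as supermartingale or Freedman-type statements adapted to $(\mathcal G_t)$ rather than to an oblivious sequence. The key point is that the misspecification terms are controlled by the $\mathcal H_{\tau-1}$-measurable $\bar\varepsilon$, so they remain predictable. I would first re-prove the weighted-reduction lemma with conditional expectations $\mathbb E[\cdot\mid\mathcal G_t]$ and check that no step uses independence of $\mathcal A_t$ from past randomness.
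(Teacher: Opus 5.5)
Your proposal is correct and follows essentially the same route as the paper. The shared steps are: the $2n\bar\varepsilon_{\mathcal I}$ conversion to the fixed block comparator; the log-det-barrier one-step bound applied conditionally on $\mathcal G_t$ with importance weights $Z_{t,m}/q_{t,m}$; the weighted square-loss reduction, where the misspecification cross term is controlled by the fixed $\bar\varepsilon_{\mathcal I}$; and a CORRAL master whose negative $\rho$-term absorbs the $\sqrt{\rho_{\mathcal I,m}}\sqrt{dn\Reg_{\rm sq}(n)}$ base term at a grid point within a factor two of $\bar\varepsilon_{\mathcal I}$. The differences are minor and do not change the bound: you use a log-barrier CORRAL master with tuned $\eta$ instead of the hedged Tsallis-INF master of Lemma~\ref{lem:hedged-tsallis-master}, you take the grid point above rather than below $\bar\varepsilon_{\mathcal I}$, and you swap $\theta_t$ for $\theta_{\mathcal I}^\star$ before splitting into master and base, which costs an extra $O(n\bar\varepsilon_{\mathcal I})$ in the master part.
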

\begin{proof}{Proof of Lemma~\ref{thm:block-mis-adaptive}.}
Let \(\mathcal G_t\) be the round-\(t\) \(\sigma\)-field defined above.
Let \(\ell_t=-r_t(a_t)\) denote the observed loss supplied to the selected
base. The master receives the shifted loss \(\ell_t+1=1-r_t(a_t)\).  Under
the mean-loss convention \(L_t(a)=-\langle a,\theta_t\rangle\), we have
\(\mathbb E[\ell_t\mid \mathcal G_t,M_t,a_t]=L_t(a_t)\).
Then \(a_t^\star\in\argmax_{a\in\mathcal A_t}
\langle a,\theta_t\rangle
=\argmin_{a\in\mathcal A_t}L_t(a)\),
and therefore
\[
    L_t(a_t)-L_t(a_t^\star)
    =
    \langle a_t^\star,\theta_t\rangle
    -
    \langle a_t,\theta_t\rangle .
\]
Thus the reward-regret on the block is exactly the loss-regret for the
losses $L_t$.

We condition on \(\mathcal F_{\mathcal I}\), the \(\sigma\)-field just before
the first within-block randomization.
Under Assumption~\ref{ass:adaptive-block-comparator}, the comparator
$\theta_{\mathcal I}^\star$ is fixed under this conditioning.  Put
$\beta_{\mathcal I}^\star:=-\theta_{\mathcal I}^\star$. Then, for every
$t\in\mathcal I$,
\begin{equation}
\begin{aligned}
&\sup_{a\in\mathcal A_t}
    \left|
        L_t(a)-\langle a,\beta_{\mathcal I}^\star\rangle
    \right|\\
=&
\sup_{a\in\mathcal A_t}
    \left|
        \langle a,\theta_t\rangle
        -
        \langle a,\theta_{\mathcal I}^\star\rangle
    \right|\le
\bar\varepsilon_{\mathcal I}.
\end{aligned}
\end{equation}
The proof follows the structure of the oblivious-sequence argument of
\citet{foster2020adapting}, with the sequence-level comparator and
misspecification radius replaced by the fixed block pair
\((\beta_{\mathcal I}^\star,\bar\varepsilon_{\mathcal I})\).
For each \(t\in\mathcal I\), the residual
\[
    e_t(a):=L_t(a)-\langle a,\beta_{\mathcal I}^\star\rangle
\]
is \(\mathcal G_t\)-measurable as a function of
\(a\in\mathcal A_t\). Moreover,
\(\sup_{a\in\mathcal A_t}|e_t(a)|\le\bar\varepsilon_{\mathcal I}\).

The learner's regret under CORRAL decomposes into two parts: master regret
and base regret. The base-regret term for base \(m\) takes the following
importance-weighted form:
\[
    \Reg_{\rm Imp}^m(\mathcal I)
    :=
    \mathbb E_\tau\!\left[
        \sum_{t\in\mathcal I}
        \frac{Z_{t,m}}{q_{t,m}}
        \bigl(
            L_t(a_t)-L_t(a_t^\star)
        \bigr)
    \right].
\]
Here \(Z_{t,m}/q_{t,m}\) is the importance weight that makes the selected
rounds for base \(m\) an unbiased conditional estimate of the loss-regret of
the counterfactual actions proposed by that base.

For every bounded random function \(\phi_t\) on
\(\mathcal A_t\) whose values are \(\mathcal G_t\)-measurable,
\begin{equation}
    \label{eq:conditional-importance-weighting}
    \mathbb E\!\left[
        \frac{Z_{t,m}}{q_{t,m}}\phi_t(a_t)
        \,\middle|\,\mathcal G_t
    \right]
    =
    \mathbb E_{a\sim p_{t,m}}[\phi_t(a)] .
\end{equation}
Because \(\beta_{\mathcal I}^\star\) and
\(\bar\varepsilon_{\mathcal I}\) are fixed under the block conditioning,
the residual-dependent functions used below are \(\mathcal G_t\)-measurable,
so \eqref{eq:conditional-importance-weighting} applies.

Let \(a_t^\circ\in\argmin_{a\in\mathcal A_t}
\langle a,\beta_{\mathcal I}^\star\rangle\) be an action optimal under the
fixed block comparator.  We use
deterministic tie-breaking so that \(a_t^\circ\) is
\(\mathcal G_t\)-measurable.
Next we present the key conversion step: it upper bounds instantaneous
dynamic regret against the true roundwise optimum by instantaneous static
regret under the fixed block model, plus the block misspecification
penalty.
By the block misspecification bound,
\[
\begin{aligned}
    L_t(a_t)-L_t(a_t^\star)
    &\le
    \langle a_t,\beta_{\mathcal I}^\star\rangle
    -
    \langle a_t^\star,\beta_{\mathcal I}^\star\rangle
    +
    2\bar\varepsilon_{\mathcal I}  \\
    &\le
    \langle a_t-a_t^\circ,
    \beta_{\mathcal I}^\star\rangle
    +
    2\bar\varepsilon_{\mathcal I}.
\end{aligned}
\]
For a distribution \(p\) on \(\mathcal A_t\), write
\(\bar a_p:=\mathbb E_{a\sim p}[a]\). Thus \(\bar a_{p_{t,m}}\) is the
mean action proposed by base \(m\) at round \(t\).
Therefore
\[
\begin{aligned}
    \Reg_{\rm Imp}^m(\mathcal I)
    &\le
    \mathbb E_\tau\!\left[
        \sum_{t\in\mathcal I}
        \frac{Z_{t,m}}{q_{t,m}}
        \langle a_t-a_t^\circ,
        \beta_{\mathcal I}^\star\rangle
    \right]
    +
    2\bar\varepsilon_{\mathcal I}n  \\
    &=
    \mathbb E_\tau\!\left[
        \sum_{t\in\mathcal I}
        \langle \bar a_{p_{t,m}}-a_t^\circ,
        \beta_{\mathcal I}^\star\rangle
    \right]
    +
    2\bar\varepsilon_{\mathcal I}n .
\end{aligned}
\]
The equality uses \eqref{eq:conditional-importance-weighting} with
\(\phi_t(a)=\langle a-a_t^\circ,
\beta_{\mathcal I}^\star\rangle\) and \(\phi_t\equiv1\). Applying
\eqref{eq:conditional-importance-weighting} with \(\phi_t\equiv1\) gives
\(\mathbb E[Z_{t,m}/q_{t,m}\mid\mathcal G_t]=1\), so the additive
misspecification term is bounded by \(2\bar\varepsilon_{\mathcal I}n\).

The base uses
\[
    \gamma_{t,m}
    =
    \min\left\{
        \frac{\sqrt d}{\varepsilon'_m},
        \sqrt{\frac{dn}{\rho_{t,m}\Reg_{\rm sq}(n)}}
    \right\}.
\]
Let \(\widehat\beta_{t,m}\) be the \(\mathcal G_t\)-measurable prediction
made by base \(m\)'s square-loss oracle at round \(t\).  Thus
\(\langle a,\widehat\beta_{t,m}\rangle\) is the base's predicted loss for
action \(a\).
By Lemma~\ref{lem:logdet-one-step}, applied conditionally on
\(\mathcal G_t\), we have
\[
\begin{aligned}
    &\langle \bar a_{p_{t,m}}-a_t^\circ,
    \beta_{\mathcal I}^\star\rangle\\
    \le&
    \frac{d}{\gamma_{t,m}}
    +
    \frac{\gamma_{t,m}}{4}
    \|\widehat\beta_{t,m}-\beta_{\mathcal I}^\star
    \|_{\mathbb E_{a\sim p_{t,m}}[aa^\top]}^2 .
\end{aligned}
\]
Let \(v_{t,m}:=\widehat\beta_{t,m}-\beta_{\mathcal I}^\star\).  The norm
term is a second moment under the action distribution proposed by base
\(m\):
\[
    \|v_{t,m}\|_{\mathbb E_{a\sim p_{t,m}}[aa^\top]}^2
    =
    \mathbb E_{a\sim p_{t,m}}
    \!\left[\langle a,v_{t,m}\rangle^2\right].
\]
Applying \eqref{eq:conditional-importance-weighting} with
\(\phi_t(a)=\gamma_{t,m}\langle a,v_{t,m}\rangle^2\), and then summing over
\(t\in\mathcal I\), rewrites this distributional second moment as the
importance-weighted realized square term
\[
    D_{\mathcal I,m}
    :=
    \mathbb E_\tau\!\left[
        \sum_{t\in\mathcal I}
        \frac{Z_{t,m}}{q_{t,m}}
        \gamma_{t,m}
        \langle a_t,\widehat\beta_{t,m}
        -\beta_{\mathcal I}^\star\rangle^2
    \right].
\]
Then
\begin{equation}
    \label{eq:base-regret-before-square-loss}
    \Reg_{\rm Imp}^m(\mathcal I)
    \le
    \sum_{t\in\mathcal I}
    \mathbb E_\tau\!\left[\frac{d}{\gamma_{t,m}}\right]
    +
    \frac14D_{\mathcal I,m}
    +
    2\bar\varepsilon_{\mathcal I}n .
\end{equation}

It remains to relate \(D_{\mathcal I,m}\) to weighted square-loss regret.
The oracle weight conditional on selecting base \(m\) is
\(w_{t,m}=\gamma_{t,m}/q_{t,m}\).  Equivalently, the effective weight for
the length-\(n\) block sequence is
\[
    \omega_{t,m}:=\frac{Z_{t,m}\gamma_{t,m}}{q_{t,m}} .
\]
On every round for which base learner \(m\) is selected,
\begin{equation}
\label{eq:weighted-square-decomposition}
    \begin{aligned}
    &\langle a_t,\widehat\beta_{t,m}
        -\beta_{\mathcal I}^\star\rangle^2 \\
    =&
    (\langle a_t,\widehat\beta_{t,m}\rangle-\ell_t)^2
    -
    (\langle a_t,\beta_{\mathcal I}^\star\rangle-\ell_t)^2 \\
    &\quad
    +
    2\bigl(\ell_t-\langle a_t,\beta_{\mathcal I}^\star\rangle\bigr)
    \langle a_t,\widehat\beta_{t,m}
        -\beta_{\mathcal I}^\star\rangle .
    \end{aligned}
\end{equation}
The weighted-update reduction of \citet{foster2020adapting}, applied to the
underlying unweighted square-loss oracle, gives
\begin{equation}
\label{eq:weighted-square-excess-bound}
    \begin{aligned}
    &\mathbb E_\tau\!\left[
        \sum_{t\in\mathcal I}
    \omega_{t,m}\left((\langle a_t,\widehat\beta_{t,m}\rangle-\ell_t)^2
    -
    (\langle a_t,\beta_{\mathcal I}^\star\rangle-\ell_t)^2\right)
    \right] \\
    &\qquad\le
    \mathbb E_\tau\!\left[
    \max_{t\in\mathcal I}
        \frac{\gamma_{t,m}}{q_{t,m}}
    \right]\Reg_{\rm sq}(n).
    \end{aligned}
\end{equation}
For the remaining cross-term, write
\(e_t(a):=L_t(a)-\langle a,\beta_{\mathcal I}^\star\rangle\). Since
\(\mathbb E[\ell_t\mid\mathcal G_t,M_t,a_t]=L_t(a_t)\), the
stochastic noise part vanishes by the tower property. Hence
\begin{equation}
\label{eq:weighted-cross-term-bound}
    \begin{aligned}
    &2\mathbb E_\tau\!\left[
        \sum_{t\in\mathcal I}
    \omega_{t,m}
    \bigl(\ell_t-\langle a_t,\beta_{\mathcal I}^\star\rangle\bigr)
    \langle a_t,\widehat\beta_{t,m}
        -\beta_{\mathcal I}^\star\rangle
    \right] \\
    =&
    2\mathbb E_\tau\!\left[
    \sum_{t\in\mathcal I}
    \omega_{t,m}
    e_t(a_t)
    \langle a_t,\widehat\beta_{t,m}
        -\beta_{\mathcal I}^\star\rangle
    \right]  \\
    \le&
    2\mathbb E_\tau\!\left[
    \sum_{t\in\mathcal I}
    \omega_{t,m}e_t(a_t)^2
    \right]
    +
    \frac12D_{\mathcal I,m},
    \end{aligned}
\end{equation}
where the last step uses \(2uv\le 2u^2+\frac12v^2\). The first term is
controlled pointwise by the block misspecification:
\begin{equation}
\label{eq:weighted-residual-square-bound}
    \begin{aligned}
    &\mathbb E_\tau\!\left[
        \sum_{t\in\mathcal I}
        \omega_{t,m}e_t(a_t)^2
    \right]\\
    =&
    \mathbb E_\tau\!\left[
        \sum_{t\in\mathcal I}
        \gamma_{t,m}
        \mathbb E_{a\sim p_{t,m}}[e_t(a)^2]
    \right] \\
    \le&
    \mathbb E_\tau\!\left[
        \sum_{t\in\mathcal I}
        \gamma_{t,m}\bar\varepsilon_{\mathcal I}^2
    \right]  \le
    n\frac{\sqrt d}{\varepsilon'_m}
    \bar\varepsilon_{\mathcal I}^2 .
    \end{aligned}
\end{equation}
Combining
\eqref{eq:weighted-square-decomposition},
\eqref{eq:weighted-square-excess-bound},
\eqref{eq:weighted-cross-term-bound}, and
\eqref{eq:weighted-residual-square-bound}, and then isolating
\(D_{\mathcal I,m}\), yields
\begin{equation}
\label{eq:weighted-D-bound}
    D_{\mathcal I,m}
    \le
    2\mathbb E_\tau\!\left[
        \max_{t\in\mathcal I}
        \frac{\gamma_{t,m}}{q_{t,m}}
    \right]\Reg_{\rm sq}(n)
    +
    4n\frac{\sqrt d}{\varepsilon'_m}
    \bar\varepsilon_{\mathcal I}^2 .
\end{equation}
Substituting \eqref{eq:weighted-D-bound} into
\eqref{eq:base-regret-before-square-loss} gives
\[
\begin{aligned}
    &\Reg_{\rm Imp}^m(\mathcal I)\\
    \le&
    \sum_{t\in\mathcal I}
    \mathbb E_\tau\!\left[\frac{d}{\gamma_{t,m}}\right]
    +
    \frac12
    \mathbb E_\tau\!\left[
        \max_{t\in\mathcal I}
        \frac{\gamma_{t,m}}{q_{t,m}}
    \right]\Reg_{\rm sq}(n)  \\
    &\qquad
    +
    n\frac{\sqrt d}{\varepsilon'_m}
    \bar\varepsilon_{\mathcal I}^2
    +
    2\bar\varepsilon_{\mathcal I}n .
\end{aligned}
\]
Since
\[
    \frac1{\gamma_{t,m}}
    \le
    \frac{\varepsilon'_m}{\sqrt d}
    +
    \sqrt{
        \frac{\rho_{\mathcal I,m}\Reg_{\rm sq}(n)}{dn}
    },
\]
we have
\[
\begin{aligned}
    &\sum_{t\in\mathcal I}
    \mathbb E_\tau\!\left[\frac{d}{\gamma_{t,m}}\right]\\
    \le&
    \varepsilon'_m n\sqrt d
    +
    \mathbb E_\tau[\sqrt{\rho_{\mathcal I,m}}]
    \sqrt{dn\,\Reg_{\rm sq}(n)}.
\end{aligned}
\]
Moreover,
\[
    \max_{t\in\mathcal I}
    \frac{\gamma_{t,m}}{q_{t,m}}
    \le
    \max_{t\in\mathcal I}
    \gamma_{t,m}\rho_{t,m}
    \le
    \sqrt{
        \frac{\rho_{\mathcal I,m}dn}{\Reg_{\rm sq}(n)}
    }.
\]
Therefore
\[
\begin{aligned}
    \Reg_{\rm Imp}^m(\mathcal I)
    &\le
    \frac{3}{2}
    \mathbb E_\tau[\sqrt{\rho_{\mathcal I,m}}]
    \sqrt{dn\,\Reg_{\rm sq}(n)}  \\
    &\qquad
    +
    \left(
        \varepsilon'_m
        +
        \frac{\bar\varepsilon_{\mathcal I}^2}{\varepsilon'_m}
    \right)n\sqrt d
    +
    2\bar\varepsilon_{\mathcal I}n .
\end{aligned}
\]
It remains to combine the base guarantee with the master guarantee.  For
each base learner \(m\), let \(a_{t,m}\) denote the action drawn from
\(p_{t,m}\) at time \(t\).  The master is updated with the shifted observed
loss \(\ell_t+1\), because \(\ell_t=-r_t(a_t)\) may be negative whereas the
master guarantee (Lemma~\ref{lem:hedged-tsallis-master}) requires
nonnegative bounded losses. After conditioning on \(\mathcal G_t\), the
mean shifted loss associated with base learner \(m\) is
\(\mathbb E_{a\sim p_{t,m}}[L_t(a)+1]\). The same shift is applied to every
base learner, so it cancels in the master regret comparison. For any fixed
base \(m\),
\[
\begin{aligned}
    &\mathbb E_\tau[\Reg(\mathcal I)]\\
    =&
    \mathbb E_\tau\!\left[
        \sum_{t\in\mathcal I}
        \bigl(
            L_t(a_t)-L_t(a_t^\star)
        \bigr)
    \right] \\
    =&
    \mathbb E_\tau\!\left[
        \sum_{t\in\mathcal I}
        \bigl(
            (L_t(a_t)+1)
            -
            (L_t(a_{t,m})+1)
        \bigr)
    \right]\\
    &+
    \Reg_{\rm Imp}^m(\mathcal I).
\end{aligned}
\]
The equality follows from \eqref{eq:conditional-importance-weighting}:
\[
    \mathbb E_\tau[L_t(a_{t,m})+1]
    =
    \mathbb E_\tau\!\left[
        \frac{Z_{t,m}}{q_{t,m}}
        \bigl(L_t(a_t)+1\bigr)
    \right].
\]

Apply Lemma~\ref{lem:hedged-tsallis-master} to the shifted master losses
\(\tilde L_{t,m}=L_t(a_{t,m})+1\in[0,1]\subset[0,2]\), we have
\[
\begin{aligned}
    &\mathbb E_\tau\!\left[
        \sum_{t\in\mathcal I}
        \bigl(L_t(a_t)-L_t(a_{t,m})\bigr)
    \right]\\
    \le&
    \widetilde O\!\left(
        \sqrt{dn\,\Reg_{\rm sq}(n)}
    \right)\\
    &-
    \frac{3}{2}
    \mathbb E_\tau[\sqrt{\rho_{\mathcal I,m}}]
    \sqrt{dn\,\Reg_{\rm sq}(n)} .
\end{aligned}
\]
Adding the preceding base bound cancels the
$\rho_{\mathcal I,m}$-dependent terms:
\[
\begin{aligned}
    \mathbb E_\tau[\Reg(\mathcal I)]
    &\le
    \widetilde O\!\left(
        \sqrt{dn\,\Reg_{\rm sq}(n)}
    \right)  \\
    &\qquad
    +
    \left(
        \varepsilon'_m
        +
        \frac{\bar\varepsilon_{\mathcal I}^2}{\varepsilon'_m}
    \right)n\sqrt d
    +
    2\bar\varepsilon_{\mathcal I}n .
\end{aligned}
\]

Finally choose the analysis grid point \(m^\star\) using the fixed upper
bound \(\bar\varepsilon_{\mathcal I}\). Such a point always exists because
the geometric grid covers \([1/n,B_\varepsilon]\) up to a factor two and
has \(1/n\) as its smallest point. If
\(\bar\varepsilon_{\mathcal I}\ge 1/n\), the grid construction gives
\(m^\star\) such that
\(\varepsilon'_{m^\star}\le\bar\varepsilon_{\mathcal I}
\le2\varepsilon'_{m^\star}\).
Because \(\bar\varepsilon_{\mathcal I}\) is fixed under the block
conditioning, this is a fixed comparator base for the master and base
lemmas.  Moreover,
\[
    \varepsilon'_{m^\star}
    +
    \frac{\bar\varepsilon_{\mathcal I}^2}{\varepsilon'_{m^\star}}
    \le
    3\bar\varepsilon_{\mathcal I}.
\]
If \(\bar\varepsilon_{\mathcal I}<1/n\), choose the smallest grid point
\(\varepsilon'_{m^\star}=1/n\). The resulting additive \(O(\sqrt d)\) term
is absorbed by the leading term.  Hence
\[
    \mathbb E_\tau[\Reg(\mathcal I)]
    \le
    \widetilde O\!\left(
        \sqrt{dn\,\Reg_{\rm sq}(n)}
        +
        \sqrt d\,n\,\bar\varepsilon_{\mathcal I}
    \right).
\]
With online Newton step for the bounded linear square-loss class,
$\Reg_{\rm sq}(n)=\widetilde O(d)$, and therefore
\[
    \mathbb E_\tau[\Reg(\mathcal I)]
    \le
    \widetilde O\!\left(
        d\sqrt n
        +
        \sqrt d\,n\,\bar\varepsilon_{\mathcal I}
    \right).
\]
\end{proof}

\begin{remark}[Why the reduction works for adaptive decision sets]
The misspecification analysis of \citet{foster2020adapting} is stated for
oblivious sequences, where the contexts, action sets, and comparator used to
define the misspecification level are fixed before the learner's
randomization. After conditioning on the sequence, the
misspecification-dependent terms in their proof are fixed and hence
predictable. With adaptive decision sets, this need
not hold: a comparator chosen after observing the realized block could depend
on future decision sets, which may themselves depend on later learner
randomization, and hence need not be \(\mathcal G_t\)-measurable at earlier
rounds. Assumption~\ref{ass:adaptive-block-comparator} avoids this issue by
fixing the block comparator and radius before the block randomization.
Therefore the residual-dependent terms used in
\eqref{eq:conditional-importance-weighting} are predictable, and the
SquareCB.Lin+ base and master guarantees can be invoked conditionally on the
block-start information.
\end{remark}
Theorem~\ref{thm:known-path-dynamic-regret} converts the block guarantee of
Lemma~\ref{thm:block-mis-adaptive} into a dynamic-regret bound for
non-stationary linear bandits with general compact decision sets, using the
restarted corralled SquareCB.Lin+ algorithm.
% Theorem~\ref{thm:known-path-dynamic-regret} converts the block guarantee of
% Lemma~\ref{thm:block-mis-adaptive} into a dynamic regret
% bound for the restarted corralled SquareCB.Lin+ algorithm.
\begin{theorem}
\label{thm:known-path-dynamic-regret}
% Suppose the problem setting above holds with \(\|a\|_2\le L\), and suppose
% the assumptions of Lemma~\ref{thm:block-mis-adaptive} hold on every
% restarted block when the block comparator is chosen as the block anchor
% \(\theta_{\mathcal I}^\star=\theta_\tau\).  Assume also that the
% square-loss oracle used by Algorithm~\ref{alg:restart-misspec-adaptive},
% implemented through the weighted-update reduction of
% \citet{foster2020adapting}, satisfies
% \(\Reg_{\rm sq}(n)=\widetilde O(d)\) on every block of length \(n\).
Suppose Assumptions~\ref{ass:subgaussian-noise}
and~\ref{ass:bounded-rewards} hold. Suppose the parameter path
\((\theta_t)_{t=1}^T\) is fixed before interaction begins, has known path
length \(P_T\), and is independent of the learner's randomization. The
decision sets may be adaptive but non-anticipating.
Run Algorithm~\ref{alg:restart-misspec-adaptive} with block length
\[
    \Delta
    =
    \left\lceil
    \min\left\{
        T,
        \max\left\{
            1,
            \left(
                \frac{\sqrt d\,T}{L P_T}
            \right)^{2/3}
        \right\}
    \right\}
    \right\rceil .
\]
Here \((\sqrt d\,T/(LP_T))^{2/3}\) is interpreted as \(+\infty\) when
\(P_T=0\). Then
\[
    \Reg_T
    \le
    \widetilde O\!\left(
        d\sqrt T
        +
        L^{1/3}d^{5/6}T^{2/3}P_T^{1/3}
    \right).
\]
% When \(P_T>0\) and
% \((\sqrt d\,T/(LP_T))^{2/3}\in[1,T]\), this gives
% \(\Reg_T\le
% \widetilde O(L^{1/3}d^{5/6}T^{2/3}P_T^{1/3})\).  For \(P_T=0\), the same
% algorithm with \(\Delta=T\) gives \(\Reg_T\le\widetilde O(d\sqrt T)\).
\end{theorem}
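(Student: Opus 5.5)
The plan is to apply Lemma~\ref{thm:block-mis-adaptive} block by block, with the anchor comparator \(\theta^\star_{\mathcal I}=\theta_\tau\) and radius \(\bar\varepsilon_{\mathcal I}=LP_{\mathcal I}\). We then sum over blocks with the tower property and optimize \(\Delta\).

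\textbf{Checking the block assumptions.} Partition \([T]\) into \(N=\lceil T/\Delta\rceil\) consecutive blocks \(\mathcal I_1,\ldots,\mathcal I_N\), each of length \(n_j\le\Delta\). For a block \(\mathcal I=\{\tau,\ldots,\tau+n-1\}\), let \(P_{\mathcal I}:=\sum_{t=\tau}^{\tau+n-2}\|\theta_{t+1}-\theta_t\|_2\). By the triangle inequality and Cauchy--Schwarz, every \(t\in\mathcal I\) and \(a\in\mathcal A_t\) satisfy
\[
|\langle a,\theta_t-\theta_\tau\rangle|\le L\|\theta_t-\theta_\tau\|_2\le LP_{\mathcal I}.
\]
The path is fixed before interaction and independent of the learner's randomization. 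Hence \(\theta_\tau\in\Theta\) and \(LP_{\mathcal I}\) are deterministic, and in particular fixed before the first within-block randomization. So Assumption~\ref{ass:adaptive-block-comparator} holds on every block, and this holds even though \(\mathcal A_t\) is adaptive, because the bound is uniform over \(a\) with \(\|a\|_2\le L\). The algorithm restarts freshly on each block, so Lemma~\ref{thm:block-mis-adaptive} together with online Newton step gives
\[
\mathbb E_{\tau_j}[\Reg(\mathcal I_j)]\le \widetilde O\bigl(d\sqrt{n_j}+\sqrt d\,n_jLP_{\mathcal I_j}\bigr).
\]

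\textbf{Summing over blocks.} Take total expectations and use the tower property, \(\Reg_T=\sum_j\mathbb E[\mathbb E_{\tau_j}[\Reg(\mathcal I_j)]]\). The logarithmic factors hidden in \(\widetilde O\) depend only on \(n_j\le T\), \(d\), \(L\), \(S\), so they are uniform across blocks. Since the blocks are disjoint, \(\sum_jP_{\mathcal I_j}\le P_T\), and \(n_j\le\Delta\). This yields
\[
\Reg_T\le\widetilde O\Bigl(N d\sqrt\Delta+\sqrt d\,\Delta LP_T\Bigr)\le\widetilde O\Bigl(\tfrac{dT}{\sqrt\Delta}+d\sqrt\Delta+\sqrt d\,L\Delta P_T\Bigr),
\]
using \(N\le T/\Delta+1\).

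\textbf{Case analysis in \(\Delta\).} Let \(\Delta^\ast=(\sqrt d\,T/(LP_T))^{2/3}\).
\begin{itemize}
\item If \(\Delta=T\), which covers \(P_T=0\) and \(\Delta^\ast\ge T\): there is one block. The drift term is \(\sqrt d\,LTP_T\le \sqrt d\,L\,T^{2/3}P_T^{1/3}(\Delta^\ast)^{1/3}\cdot(T/\Delta^\ast)^{1/3}\). Equivalently, \(LP_T\le\sqrt d\,T^{-1/2}\), so the drift term is at most \(dT^{1/2}\) and the total is \(\widetilde O(d\sqrt T)\).
\item If \(1\le\Delta^\ast<T\): take \(\Delta=\lceil\Delta^\ast\rceil\in[\Delta^\ast,2\Delta^\ast]\). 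Then \(dT/\sqrt\Delta\le dT(LP_T)^{1/3}/(d^{1/6}T^{1/3})=L^{1/3}d^{5/6}T^{2/3}P_T^{1/3}\). The drift term is \(\sqrt d\,L\Delta P_T\le 2\sqrt d\,LP_T(\sqrt dT/(LP_T))^{2/3}=2L^{1/3}d^{5/6}T^{2/3}P_T^{1/3}\). Finally, \(d\sqrt\Delta\le d\sqrt T\).
\item If \(\Delta^\ast<1\): then \(\Delta=1\) and the bound is \(\widetilde O(dT+\sqrt d\,LP_T)\). Here \(LP_T>\sqrt d\,T\), so \(dT=d^{5/6}T^{2/3}(d^{1/2}T)^{1/3}\le L^{1/3}d^{5/6}T^{2/3}P_T^{1/3}\). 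Also, since \(\|a\|_2\le L\) and \(\|\theta_t\|_2\le S\), the per-round regret is at most \(2LS\), and trivially \(\Reg_T\le T\cdot O(1)\) under bounded rewards; this already gives \(\Reg_T\le O(T)\le O(dT)\), which is dominated by the same bound.
\end{itemize}

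\textbf{Main obstacle.} The main point to check is the conditional application in the first step. Blocks after the first begin at a random history \(\mathcal H_{\tau-1}\), so the lemma's hypotheses must hold conditionally on it. This works because the comparator and radius are deterministic, and the oracle regret \(\Reg_{\rm sq}(n)\) is assumed over adaptive non-anticipating sequences. The remaining work is just the bookkeeping that ceilings and the last partial block cost only constant factors.
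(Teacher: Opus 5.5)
Your proposal is correct and follows the paper's proof essentially step for step: anchor comparator $\theta_\tau$ with radius $LP_{\mathcal I}$, conditional blockwise application of Lemma~\ref{thm:block-mis-adaptive}, summation to $\widetilde O(dT/\sqrt\Delta+\sqrt d\,L\Delta P_T)$, and the same three-way case split, using the trivial bound $\Reg_T\le T$ when $\Delta^\ast<1$. The differences are cosmetic: the paper caps the radius at $\min\{LP_{\mathcal I},B_\varepsilon\}$ with $B_\varepsilon=2LS$ so that it stays inside the grid used in the lemma's proof (omitting the cap is harmless, because for a radius above $2LS$ the lemma's bound already exceeds the trivial block regret), and the first inequality in your $\Delta=T$ bullet is false as written when $P_T>1$, although the next line, $LP_T\le\sqrt d\,T^{-1/2}$, gives the correct argument.
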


\begin{proof}{Proof of Theorem~\ref{thm:known-path-dynamic-regret}.}
Let \(\mathcal I_i=\{\tau_i,\ldots,\tau_i+n_i-1\}\), with
\(n_i:=|\mathcal I_i|\le\Delta\), denote the restarted blocks.  The last
block may have $n_i<\Delta$.
For each block, set
\[
    P_{\mathcal I_i}
    :=
    \sum_{s=\tau_i+1}^{\tau_i+n_i-1}
    \|\theta_s-\theta_{s-1}\|_2 .
\]
Choosing the block anchor \(\theta_{\mathcal I_i}^\star=\theta_{\tau_i}\),
define the fixed analysis radius
\[
    \bar\varepsilon_{\mathcal I_i}
    :=
    \min\{L P_{\mathcal I_i},B_\varepsilon\},
    \qquad B_\varepsilon:=2LS.
\]
Because the parameter path is oblivious, \(P_{\mathcal I_i}\), and hence
\(\bar\varepsilon_{\mathcal I_i}\), is fixed before the block. Moreover,
for every \(t\in\mathcal I_i\) and \(a\in\mathcal A_t\),
\[
    |\langle a,\theta_t-\theta_{\tau_i}\rangle|
    \le
    L\|\theta_t-\theta_{\tau_i}\|_2
    \le
    L P_{\mathcal I_i},
\]
and also
\[
    |\langle a,\theta_t-\theta_{\tau_i}\rangle|
    \le
    L(\|\theta_t\|_2+\|\theta_{\tau_i}\|_2)
    \le
    B_\varepsilon.
\]
Thus Assumption~\ref{ass:adaptive-block-comparator} holds with
\(\bar\varepsilon_{\mathcal I_i}\). Applying
Lemma~\ref{thm:block-mis-adaptive} to this restarted block, together
with \(\Reg_{\rm sq}(n_i)=\widetilde O(d)\), gives
\[
\begin{aligned}
    \mathbb E_{\tau_i}[\Reg(\mathcal I_i)]
    &\le
    \widetilde O\!\left(
        d\sqrt{n_i}
        +
        \sqrt d\,n_i\,\bar\varepsilon_{\mathcal I_i}
    \right)                                                    \\
    &\le
    \widetilde O\!\left(
        d\sqrt{\Delta}
        +
        L\sqrt d\,\Delta\,P_{\mathcal I_i}
    \right).
\end{aligned}
\]
Let \(N\) be the number of restarted blocks in the partition. Taking
expectations and summing over blocks gives
\[
\begin{aligned}
    \Reg_T
    &\le
    \widetilde O\!\left(
        Nd\sqrt{\Delta}
        +
        L\sqrt d\,\Delta
        \sum_{i=1}^N P_{\mathcal I_i}
    \right)                                                     \\
    &\le
    \widetilde O\!\left(
        \frac{dT}{\sqrt{\Delta}}
        +
        L\sqrt d\,\Delta P_T
    \right).
\end{aligned}
\]
Here we used \(N\le T/\Delta+1\le 2T/\Delta\) and
\(\sum_iP_{\mathcal I_i}\le P_T\). 
% The latter sum includes only
% within-block parameter changes.

If \(P_T=0\), the theorem sets \(\Delta=T\), and the preceding bound gives
\[
    \Reg_T
    \le
    \widetilde O(d\sqrt T).
\]
If \(P_T>0\), let
\[
    \Delta_\star
    :=
    \left(
        \frac{\sqrt d\,T}{L P_T}
    \right)^{2/3}.
\]
If
\(\Delta_\star>T\), using \(\Delta=T\) gives
\(\Reg_T\le\widetilde O(d\sqrt T)\). If \(\Delta_\star<1\), then
\(LP_T>\sqrt d\,T\) implies that
\(L^{1/3}d^{5/6}T^{2/3}P_T^{1/3}\) is at least order \(T\) for \(d\ge1\),
so we use the trivial bounded-regret bound \(\Reg_T\le T\). Hence in all
cases
\[
    \Reg_T
    \le
    \widetilde O\!\left(
        d\sqrt T
        +
        L^{1/3}d^{5/6}T^{2/3}P_T^{1/3}
    \right).
\]
% as claimed.
\end{proof}

\section{K-Armed Contextual Linear Bandits with an Oblivious Adversary}
\label{sec:finite-arm}
In this section we consider the \(K\)-armed contextual linear-bandit
specialization under an oblivious adversary, where
\(\mathcal A_t=\{x_t(i):i\in[K]\}\) at each round.  In this setting, the restarted SupLinUCB algorithm can be analyzed
directly. This uses the observation of \citet{takemura2021parameter} that
SupLinUCB satisfies a misspecification-adaptive regret bound. \ZH{Because
the standard SupLinUCB analysis relies on the stagewise independence property
under contexts fixed before the learner's randomization, our SupLinUCB
guarantee inherits the same oblivious-adversary assumption.} On each block,
parameter drift induces a uniform misspecification term, and the
confidence/elimination analysis absorbs this perturbation without a CORRAL
master.
The logarithmic dependence on the number of arms enters through the quantity
\(\Lambda_K:=1+\log K\), which we use throughout this section.
\begin{assumption}[Oblivious block comparator]
\label{ass:oblivious-block-comparator}
The sequence \((\mathcal A_t,\theta_t)_{t=1}^T\) is fixed before the
learner's randomization.  For each block \(\mathcal I\), choose a comparator
\(\theta_{\mathcal I}^\star\in\Theta\), possibly as a function of the fixed
block sequence, and set
\(\varepsilon_{\mathcal I}:=\varepsilon_{\mathcal I}
(\theta_{\mathcal I}^\star)\).
\end{assumption}

We use the standard SupLinUCB algorithm of
\citet[Algorithm~3]{chu2011contextual}, run freshly on each restarted
block.  On a block \(\mathcal I=\{\tau,\ldots,\tau+n-1\}\), set
the number of stages to \(S=\lceil\log_2 n\rceil\), initialize all stage sample sets
\(\Psi_\tau^\ell=\emptyset\), and use
\(\alpha=\max\{1,\sqrt{\frac12\log(2nK/\delta)}\}\).
\ZH{Here \(\Psi_t^\ell\) denotes the set of rounds in the current block that
have been assigned to stage \(\ell\) before round \(t\). In particular,
\(\Psi_{\tau+n}^\ell\) is the final stage-\(\ell\) sample set for the
block.}
The stage notation
\(\widehat A_\ell(t)\), \(A_t^\ell\), \(s_{t,a}^\ell\), and
\(w_{t,a}^\ell\) is as in \citet{chu2011contextual} and is specified in
Lemma~\ref{lem:suplinucb-perturbed-facts}.
\ZH{At a high level, SupLinUCB maintains stage-specific sample
sets, repeatedly refines the active arm set using upper-confidence
comparisons, and either stops when all surviving arms have small widths or
explores an arm with large stage-\(\ell\) width.}
Lemma~\ref{thm:suplinucb-block-mis} gives the block regret guarantee for
restarted SupLinUCB under uniform block misspecification.
\begin{lemma}
\label{thm:suplinucb-block-mis}
Fix a block $\mathcal I=\{\tau,\ldots,\tau+n-1\}$ and suppose
Assumption~\ref{ass:oblivious-block-comparator} holds on $\mathcal I$.
Assume that Assumption~\ref{ass:subgaussian-noise} holds,
\(\mathcal A_t=\{x_t(i):i\in[K]\}\), \(\|a\|_2\le1\) for all
\(a\in\mathcal A_t\), and \(\|\theta_{\mathcal I}^\star\|_2\le 1\). Run
SupLinUCB freshly on \(\mathcal I\) with confidence parameter
\(\alpha=\max\{1,\sqrt{\frac12\log(2nK/\delta)}\}\).
Then, with probability at least $1-\delta$, conditional on
$\mathcal H_{\tau-1}$,
\[
    \Reg(\mathcal I)
    \le
    \widetilde O\!\left(
        \sqrt{dn}\,\Lambda_K
        +
        \varepsilon_{\mathcal I}n\sqrt{d\Lambda_K}
    \right),
\]
where $\widetilde O(\cdot)$ hides polylogarithmic factors in
$n,d,1/\delta$, but not in \(K\).  Consequently, if per-round regret is
bounded by a universal constant, choosing $\delta=1/n$ gives
\[
    \mathbb E_\tau[\Reg(\mathcal I)]
    \le
    \widetilde O\!\left(
        \sqrt{dn}\,\Lambda_K
        +
        \varepsilon_{\mathcal I}n\sqrt{d\Lambda_K}
    \right).
\]

% This result belongs to the oblivious part of the analysis.  It is not a
% replacement for the fully adaptive-adversary SquareCB/CORRAL block
% guarantee, because the proof below invokes the stagewise independence
% property established for SupLinUCB in the original analysis of
% \citet{chu2011contextual}.
\end{lemma}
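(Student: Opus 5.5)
The plan is to re-run the original SupLinUCB analysis of \citet{chu2011contextual} on the fresh block \(\mathcal I\), conditional on \(\mathcal H_{\tau-1}\), with the true mean \(\langle a,\theta_t\rangle\) replaced by the fixed-comparator mean \(\langle a,\theta_{\mathcal I}^\star\rangle\) plus a uniformly bounded residual \(e_t(a):=\langle a,\theta_t-\theta_{\mathcal I}^\star\rangle\) with \(|e_t(a)|\le\varepsilon_{\mathcal I}\). This follows the observation of \citet{takemura2021parameter}.

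The first step is the same conversion used in the proof of Lemma~\ref{thm:block-mis-adaptive}. With \(a_t^\circ\in\argmax_{a\in\mathcal A_t}\langle a,\theta_{\mathcal I}^\star\rangle\), the instantaneous dynamic regret satisfies
\[
\langle a_t^\star,\theta_t\rangle-\langle a_t,\theta_t\rangle\le\langle a_t^\circ-a_t,\theta_{\mathcal I}^\star\rangle+2\varepsilon_{\mathcal I}.
\]
Summing gives an additive \(2n\varepsilon_{\mathcal I}\), which is dominated by \(\varepsilon_{\mathcal I}n\sqrt{d\Lambda_K}\). It then remains to bound the static regret against \(\theta_{\mathcal I}^\star\) under rewards whose means are perturbed by \(e_t\).

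The second step is a perturbed confidence bound, which I would state as Lemma~\ref{lem:suplinucb-perturbed-facts}. Fix a stage \(\ell\) and a round \(t\). Write \(\widehat\theta_t^\ell=(A_t^\ell)^{-1}\sum_{s\in\Psi_t^\ell}x_s r_s\), where \(A_t^\ell=I+\sum_{s\in\Psi_t^\ell}x_sx_s^\top\). Then
\[
\langle x,\widehat\theta_t^\ell-\theta_{\mathcal I}^\star\rangle
= x^\top(A_t^\ell)^{-1}\Bigl(\sum_{s\in\Psi_t^\ell}x_s\xi_s-\theta_{\mathcal I}^\star\Bigr)+x^\top(A_t^\ell)^{-1}\sum_{s\in\Psi_t^\ell}x_se_s(x_s).
\]
The noise and regularization parts are handled exactly as in \citet{chu2011contextual}. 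This uses the stagewise independence property, which is valid because the sequence is oblivious (Assumption~\ref{ass:oblivious-block-comparator}) and the block is freshly restarted. With \(\alpha\) as chosen and a union bound over \(n,K\), this part is at most \((\alpha+1)s_{t,x}^\ell\) with probability \(1-\delta\). By Cauchy–Schwarz, the misspecification part is at most \(\varepsilon_{\mathcal I}\sqrt{|\Psi_t^\ell|}\,s_{t,x}^\ell\le\varepsilon_{\mathcal I}\sqrt n\,s_{t,x}^\ell\). This is crude, so I would instead use the sharper bound \(\varepsilon_{\mathcal I}\sqrt{d}\) on the quantity \(\sum_s|x^\top (A_t^\ell)^{-1}x_s|\), obtained from the projection-matrix argument used for misspecified least squares. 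The resulting perturbed width is \(w_{t,x}^\ell+\varepsilon_{\mathcal I}\sqrt d\)-type slack, with no change to the algorithm.

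The third step reproduces the elimination facts with this slack. On stage \(\ell\), the optimal arm \(a_t^\circ\) survives, and every surviving arm has static gap at most \(8\cdot2^{-\ell}+O(\varepsilon_{\mathcal I}\sqrt{d})\). The additive slack accumulates across the \(S=\lceil\log_2n\rceil\) stages only through the comparisons actually made, so it contributes \(O(\varepsilon_{\mathcal I}\sqrt d)\) per round up to logarithmic factors.

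The fourth step sums. Following the original analysis, \(|\Psi_{\tau+n}^\ell|\le 5(1+\alpha^2)2^\ell\sqrt{d|\Psi^\ell_{\tau+n}|}\), which gives the \(\sqrt{dn}\,\alpha\,S\) term. Since \(\alpha^2\lesssim\log(nK/\delta)\), this yields \(\sqrt{dn}\,\Lambda_K\) after hiding logs in \(n,1/\delta\). The misspecification accumulation gives \(\varepsilon_{\mathcal I}n\sqrt d\) times a factor from \(\alpha\), hence \(\sqrt{d\Lambda_K}\). For the expectation version, set \(\delta=1/n\) and add \(n\cdot\delta\cdot O(1)=O(1)\).

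I expect the main obstacle to be the second and third steps: obtaining an \(\varepsilon_{\mathcal I}\sqrt d\) rather than \(\varepsilon_{\mathcal I}\sqrt n\) per-round bias. I also need to check that the stage-\(\ell\) exploration criterion (width \(>2^{-\ell}\)) still controls the sample-set sizes once the bias is included. If the projection bound fails in the stagewise setting, I would fall back on the argument of \citet{takemura2021parameter}. That argument charges the bias only on rounds where the arm is selected with large width and uses \(\sum_{s\in\Psi^\ell}s_{s,x_s}^\ell\le\sqrt{d|\Psi^\ell|\log n}\).
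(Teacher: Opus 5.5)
Your overall structure matches the paper: conditioning on $\mathcal H_{\tau-1}$, stagewise independence, the noise/regularization/misspecification decomposition of $\langle x,\widehat\theta_t^\ell-\theta_{\mathcal I}^\star\rangle$, the Lemma~6 counting, and $\delta=1/n$. The gap is in your second step, which is where the lemma's content lies.

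The bound $\sum_{s\in\Psi_t^\ell}|x^\top(A_t^\ell)^{-1}x_s|\le\sqrt d$ is not a general fact about regularized least squares. In the misspecified-least-squares literature it comes from a G-optimal (Kiefer--Wolfowitz) design, which SupLinUCB never computes. A projection/hat-matrix argument only reproduces your Cauchy--Schwarz bound $\sqrt{|\Psi_t^\ell|}\,s_{t,x}^\ell$. Here is a counterexample. Take $d=2$, $N$ unit vectors $x_s\propto e_1+\sigma_sN^{-1/2}e_2$ with balanced signs $\sigma_s=\pm1$, and $x=e_2$. Then $A\approx\mathrm{diag}(1+N,2)$ and $\sum_s|x^\top A^{-1}x_s|\approx\sqrt N/2$, and residuals $\epsilon_s=\varepsilon_{\mathcal I}\sigma_s$ give bias $\approx\varepsilon_{\mathcal I}\sqrt N/2$. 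So the bound you called crude is essentially tight for an arbitrary design, and any improvement must come from the stage structure.

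Your fallback, as described, does not supply this. The bias matters in the estimates $\hat r_{t,a}^\ell$ of arms entering elimination or stopping comparisons. Those are arms with \emph{small} width, not the arms selected on large-width exploration rounds. The elliptical-potential sum $\sum_{s\in\Psi^\ell}s^\ell_{s,x_s}\le\sqrt{d|\Psi^\ell|\log n}$ by itself only bounds $|\Psi^\ell|$.

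The missing idea, whose ingredients you already have, is a cancellation of $2^{\pm\ell}$, and it is what the paper uses. Keep the bias bound $\varepsilon_{\mathcal I}\sqrt{|\Psi_t^\ell|}\,s_{t,a}^\ell$.
\begin{itemize}
\item At a stage-$\ell$ elimination, every active arm has $w_{t,a}^\ell\le2^{-\ell}$, so $s_{t,a}^\ell\le2^{-\ell}/\alpha$.
\item Since $\Psi_t^\ell\subseteq\Psi_{\tau+n}^\ell$, the Lemma~6 inequality of \citet{chu2011contextual} that you quote in your fourth step gives $\sqrt{|\Psi_t^\ell|}\le5\cdot2^\ell(1+\alpha^2)\sqrt d$.
\item Hence the bias is at most $5(1+\alpha^2)\sqrt d\,\varepsilon_{\mathcal I}/\alpha\le10\alpha\sqrt d\,\varepsilon_{\mathcal I}$.
\item At a confident stop, $w\le n^{-1/2}$ and $|\Psi|\le n$ give bias at most $\varepsilon_{\mathcal I}$.
\end{itemize}
This is where the factor $\alpha$, and hence $\sqrt{\Lambda_K}$, in the misspecification term comes from. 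Your slack $\varepsilon_{\mathcal I}\sqrt d$ has no $\alpha$, which is inconsistent with the $\sqrt{d\Lambda_K}$ you claim at the end. Your worry about the exploration criterion is unfounded: Lemma~6 is a deterministic count driven only by widths, so misspecification does not affect it.

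Two smaller corrections:
\begin{itemize}
\item In your third step, the optimal arm $a_t^\circ$ need \emph{not} survive once the intervals carry slack $\Delta_{\mathcal I}=C\varepsilon_{\mathcal I}\alpha\sqrt d$. You need the benchmark-replacement induction: replace an eliminated benchmark by the arm that eliminated it, losing $O(\Delta_{\mathcal I})$ per stage. This gives gaps $C2^{-\ell}+C\ell\Delta_{\mathcal I}$ and the $CS\Delta_{\mathcal I}n$ term.
\item Summing Lemma~6 over stages gives $(1+\alpha^2)\sqrt{dSn}$, with $\alpha^2$ rather than $\alpha$. That is what produces the factor $\Lambda_K$.
\end{itemize}
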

\begin{proof}{Proof of Lemma~\ref{thm:suplinucb-block-mis}.}
Condition on \(\mathcal H_{\tau-1}\).  Under
Assumption~\ref{ass:oblivious-block-comparator}, write
\(\mu_t(a)=\langle a,\theta_{\mathcal I}^\star\rangle+\epsilon_t(a)\),
where \(|\epsilon_t(a)|\le\varepsilon_{\mathcal I}\).
\ZH{After conditioning, the block comparator, the contextual action sets,
and the residual functions \(\epsilon_t(\cdot)\) are fixed before the
within-block randomization of SupLinUCB.}
Thus, relative to the fixed comparator \(\theta_{\mathcal I}^\star\), the
block is a realizable contextual linear bandit with an additive
misspecification term that is deterministic after conditioning and uniformly
bounded by \(\varepsilon_{\mathcal I}\).
\ZH{This is the point at which the oblivious block assumption is used: the
misspecification residuals may vary across \(t\) and \(a\), but they do not
adapt to the current randomized action or reward noise.}
By Lemma~\ref{lem:suplinucb-perturbed-facts}, on an
event of conditional probability at least \(1-\delta\), all SupLinUCB
confidence and elimination comparisons satisfy the usual realizable bounds
with an additional perturbation
\[
    \Delta_{\mathcal I}
    =
    C\varepsilon_{\mathcal I}\alpha\sqrt d .
\]
\ZH{Hence the subsequent argument can follow the original SupLinUCB
exploration/elimination counting proof, with each comparison paying an
additional \(O(\Delta_{\mathcal I})\) error.}
SupLinUCB either explores at some stage \(\ell\), in which case the played
round is added to the stage-\(\ell\) sample set, or it stops confidently and
plays from the surviving active set. Let \(\mathcal E_\ell\) be the set of
rounds on which the algorithm explores at stage \(\ell\), and let
\(\mathcal T_{\rm conf}\) be the set of confident-stopping rounds. These
sets partition \(\mathcal I\):
\ZH{each round either explores at exactly one stage or stops confidently, so
every round is counted once.}
Moreover,
\(\mathcal E_\ell=\Psi_{\tau+n}^\ell\), since the restarted algorithm
initializes \(\Psi_\tau^\ell=\emptyset\) and adds a round to \(\Psi^\ell\)
exactly when it explores at stage \(\ell\).
On an
exploration round in \(\mathcal E_\ell\), the active-set part of
Lemma~\ref{lem:suplinucb-perturbed-facts} bounds the instantaneous regret
by \(C2^{-\ell}+CS\Delta_{\mathcal I}\).  On a confident-stopping round,
the same lemma gives \(Cn^{-1/2}+CS\Delta_{\mathcal I}\).  Summing these
two bounds over the partition gives
\ZH{\[
\begin{aligned}
    \Reg(\mathcal I)
    &\le
    \sum_{\ell=1}^{S}
    |\mathcal E_\ell|
    \bigl(C2^{-\ell}+CS\Delta_{\mathcal I}\bigr)
    +
    |\mathcal T_{\rm conf}|
    \bigl(Cn^{-1/2}+CS\Delta_{\mathcal I}\bigr)  \\
    &\le
    C\sum_{\ell=1}^{S}2^{-\ell}|\mathcal E_\ell|
    +
    C\sqrt n
    +
    CS\Delta_{\mathcal I}n,
\end{aligned}
\]}
where the second inequality uses
\ZH{\(|\mathcal T_{\rm conf}|\le n\)} and
\ZH{\(\sum_{\ell=1}^S|\mathcal E_\ell|+|\mathcal T_{\rm conf}|=n\)}.
Since \(\mathcal E_\ell=\Psi_{\tau+n}^\ell\), this yields
\[
\begin{aligned}
    \Reg(\mathcal I)
    &\le
    C\sum_{\ell=1}^{S}2^{-\ell}|\Psi_{\tau+n}^\ell|
    +
    C\sqrt n
    +
    CS\Delta_{\mathcal I}n .
\end{aligned}
\]
Here \(S=\lceil\log_2 n\rceil\).  Lemma~6 of
\citet{chu2011contextual} gives
\[
    |\Psi_{\tau+n}^\ell|
    \le
    5\cdot 2^\ell(1+\alpha^2)
    \sqrt{d|\Psi_{\tau+n}^\ell|}.
\]
\ZH{This invocation is unaffected by misspecification because Lemma~6 of
\citet{chu2011contextual} is a deterministic counting bound for the
stage-\(\ell\) sample set, based on the
elliptical-potential argument and the rule that a round is added to
\(\Psi^\ell\) only when its stage-\(\ell\) width is large. It does not rely
on the linear reward model being well specified, nor on the confidence
intervals being valid for the true rewards.}
\ZH{Multiplying this display by \(2^{-\ell}\) and summing over stages gives
\[
    \sum_{\ell=1}^S2^{-\ell}|\Psi_{\tau+n}^\ell|
    \le
    C(1+\alpha^2)\sqrt d
    \sum_{\ell=1}^S\sqrt{|\Psi_{\tau+n}^\ell|}.
\]
Since the stage sample sets are disjoint and contain at most \(n\) rounds,
Cauchy--Schwarz gives
\[
    \sum_{\ell=1}^S\sqrt{|\Psi_{\tau+n}^\ell|}
    \le
    \sqrt{S\sum_{\ell=1}^S|\Psi_{\tau+n}^\ell|}
    \le
    \sqrt{Sn}.
\]
Therefore,}
\[
    \sum_{\ell=1}^S2^{-\ell}|\Psi_{\tau+n}^\ell|
    \le
    C(1+\alpha^2)\sqrt{dSn}.
\]
Using \(S=O(\log n)\),
\(\alpha^2=O(\log(nK/\delta))\), and
\(\Delta_{\mathcal I}=C\varepsilon_{\mathcal I}\alpha\sqrt d\), we obtain
\[
    \Reg(\mathcal I)
    \le
    \widetilde O\!\left(
        \sqrt{dn}\,\Lambda_K
        +
        \varepsilon_{\mathcal I}n\sqrt{d\Lambda_K}
    \right)
\]
with conditional probability at least \(1-\delta\).  Taking
\(\delta=1/n\) and using bounded per-round regret gives the conditional
expectation bound.
\end{proof}

Theorem~\ref{thm:known-path-karmed-dynamic-regret} converts the SupLinUCB
block guarantee into a known-path-length dynamic regret bound for the
\(K\)-armed contextual case.
\begin{theorem}
\label{thm:known-path-karmed-dynamic-regret}
Consider the \(K\)-armed contextual linear-bandit setting under
Assumptions~\ref{ass:subgaussian-noise},
\ref{ass:path-length}, and~\ref{ass:oblivious-block-comparator}, with
\(\mathcal A_t=\{x_t(i):i\in[K]\}\), \(\|a\|_2\le1\), and
\(\|\theta_t\|_2\le1\). Run SupLinUCB independently on consecutive blocks
of length
\[
    \Delta
    =
    \left\lceil
    \min\left\{
                T,
                \max\left\{
                    1,
                    \left(
                        \frac{\sqrt{\Lambda_K}\,T}{P_T}
                    \right)^{2/3}
                \right\}
            \right\}
    \right\rceil ,
\]
where \((\sqrt{\Lambda_K}T/P_T)^{2/3}\) is interpreted as \(+\infty\)
when \(P_T=0\).
Then
\[
    \Reg_T
    \le
    \widetilde O\!\left(
        \sqrt{dT}\,\Lambda_K
        +
        \sqrt d\,\Lambda_K^{5/6}T^{2/3}P_T^{1/3}
    \right),
\]
where \(\widetilde O(\cdot)\) hides polylogarithmic factors in
\(T,d\), but not in \(K\).
\end{theorem}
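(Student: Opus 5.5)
The plan mirrors the proof of Theorem~\ref{thm:known-path-dynamic-regret}, with Lemma~\ref{thm:suplinucb-block-mis} in place of Lemma~\ref{thm:block-mis-adaptive}. Partition \([T]\) into consecutive blocks \(\mathcal I_i=\{\tau_i,\ldots,\tau_i+n_i-1\}\) with \(n_i\le\Delta\). The number of blocks is \(N\le T/\Delta+1\le 2T/\Delta\). On each block take the anchor comparator \(\theta_{\mathcal I_i}^\star=\theta_{\tau_i}\), which satisfies \(\|\theta_{\tau_i}\|_2\le1\). Since the sequence is oblivious, this choice is a function of the fixed block sequence, so Assumption~\ref{ass:oblivious-block-comparator} holds. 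For \(t\in\mathcal I_i\) and \(a\in\mathcal A_t\) with \(\|a\|_2\le1\), the triangle inequality along the path gives
\[
|\langle a,\theta_t-\theta_{\tau_i}\rangle|\le\|\theta_t-\theta_{\tau_i}\|_2\le P_{\mathcal I_i}:=\sum_{s=\tau_i+1}^{\tau_i+n_i-1}\|\theta_s-\theta_{s-1}\|_2 .
\]
Hence \(\varepsilon_{\mathcal I_i}\le P_{\mathcal I_i}\). The within-block sums are disjoint pieces of the path, so \(\sum_iP_{\mathcal I_i}\le P_T\).

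Apply the expectation form of Lemma~\ref{thm:suplinucb-block-mis} on each block, with \(\delta=1/n_i\). Per-round regret is at most \(2\), because \(\|a\|_2,\|\theta_t\|_2\le1\). This gives
\[
\mathbb E_{\tau_i}[\Reg(\mathcal I_i)]\le\widetilde O\bigl(\sqrt{d\Delta}\,\Lambda_K+P_{\mathcal I_i}\Delta\sqrt{d\Lambda_K}\bigr).
\]
The restarts are independent, and each bound holds conditionally on \(\mathcal H_{\tau_i-1}\). Taking total expectation and summing over blocks then yields
\[
\Reg_T\le\widetilde O\Bigl(\frac{T}{\sqrt\Delta}\sqrt d\,\Lambda_K+\Delta P_T\sqrt{d\Lambda_K}\Bigr).
\]
The hidden logarithms are in \(n_i\le T\) and \(d\), not in \(K\), as the statement requires.

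Next, balance the two terms. With \(\Delta_\star=(\sqrt{\Lambda_K}T/P_T)^{2/3}\), we have \(T\Lambda_K\Delta_\star^{-1/2}=\Lambda_K^{5/6}T^{2/3}P_T^{1/3}\) and \(\Delta_\star P_T\sqrt{\Lambda_K}=\Lambda_K^{5/6}T^{2/3}P_T^{1/3}\). So at \(\Delta=\lceil\Delta_\star\rceil\), where the ceiling costs at most a factor two, both terms are \(\sqrt d\,\Lambda_K^{5/6}T^{2/3}P_T^{1/3}\). The remaining cases are handled as follows.
\begin{itemize}
\item If \(P_T=0\) or \(\Delta_\star>T\): then \(\Delta=T\), there is one block, and the second term is \(T P_T\sqrt{d\Lambda_K}\le\sqrt d\,\Lambda_K^{5/6}T^{2/3}P_T^{1/3}\), because \(TP_T<\Lambda_K^{1/3}T^{2/3}P_T^{1/3}\) exactly when \(\Delta_\star>T\). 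The first term is \(\sqrt{dT}\,\Lambda_K\).
\item If \(\Delta_\star<1\): then \(P_T>\sqrt{\Lambda_K}T\), so \(\Lambda_K^{5/6}T^{2/3}P_T^{1/3}\ge\Lambda_KT\ge T\). The trivial bound \(\Reg_T\le2T\) suffices.
\end{itemize}
Combining the three cases gives the claimed bound \(\widetilde O(\sqrt{dT}\,\Lambda_K+\sqrt d\,\Lambda_K^{5/6}T^{2/3}P_T^{1/3})\).

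The step I expect to need the most care is the application of the block lemma. Its misspecification bound must be the anchor-based quantity \(P_{\mathcal I_i}\), not an \(\varepsilon\) chosen in hindsight, and its probability-\(1-\delta\) statement must convert to an expectation bound without picking up \(\log K\) factors. Taking \(\delta=1/n_i\) adds only \(O(1)\) expected regret, and \(\alpha^2=O(\log(nK))\) contributes at most \(\Lambda_K\) up to logs in \(n\). Checking this log bookkeeping is the main point to verify; everything else is the same summation as in Theorem~\ref{thm:known-path-dynamic-regret}.
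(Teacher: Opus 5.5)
Your proposal is correct and follows the paper's proof essentially step for step. The paper also uses the anchor comparator $\theta_{\tau_i}$ to get $\varepsilon_{\mathcal I_i}\le P_{\mathcal I_i}$, applies the expectation form of Lemma~\ref{thm:suplinucb-block-mis} on each block, sums to the $T/\sqrt\Delta$ versus $\Delta P_T$ tradeoff, and balances at $\Delta_\star$. Your explicit treatment of the endpoint cases $\Delta_\star>T$ and $\Delta_\star<1$, which the paper defers to the proof of Theorem~\ref{thm:known-path-dynamic-regret}, is correct, including the exponent arithmetic.
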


\begin{proof}{Proof of Theorem~\ref{thm:known-path-karmed-dynamic-regret}.}
On each block \(\mathcal I_i=\{\tau_i,\ldots,\tau_i+n_i-1\}\), the anchor
\(\theta_{\mathcal I_i}^\star=\theta_{\tau_i}\) gives
\(\varepsilon_{\mathcal I_i}\le P_{\mathcal I_i}\), where $P_{\mathcal I_i}
    :=
    \sum_{s=\tau_i+1}^{\tau_i+n_i-1}
    \|\theta_s-\theta_{s-1}\|_2 $. The norm bounds also give
\(\|\theta_{\mathcal I_i}^\star\|_2\le1\) and uniformly bounded per-round
regret.
% \[
%     P_{\mathcal I_i}
%     :=
%     \sum_{s=\tau_i+1}^{\tau_i+n_i-1}
%     \|\theta_s-\theta_{s-1}\|_2 .
% \]
Lemma~\ref{thm:suplinucb-block-mis} therefore gives
\[
    \mathbb E_{\tau_i}[\Reg(\mathcal I_i)]
    \le
    \widetilde O\!\left(
        \sqrt{dn_i}\,\Lambda_K
        +
        \sqrt{d\Lambda_K}\,n_iP_{\mathcal I_i}
    \right).
\]
Summing over the restarted blocks gives the tradeoff
\[
    \Reg_T
    \le
    \widetilde O\!\left(
        \frac{\sqrt d\,\Lambda_K T}{\sqrt\Delta}
        +
        \sqrt{d\Lambda_K}\,\Delta P_T
    \right).
\]
Optimizing this display with the chosen \(\Delta\) gives the stated bound.
The endpoint cases \(\Delta=T\) and \(\Delta=1\) are handled as in
Theorem~\ref{thm:known-path-dynamic-regret}.
\end{proof}

The following proposition records a lower bound for non-stationary
\(K\)-armed contextual linear bandits. 
A proof sketch is deferred to Appendix~\ref{app:finite-k-lower-bound-proof}.
\begin{proposition}
\label{prop:finite-k-lower-bound}
Consider \(K\)-armed contextual linear bandits with \(K\ge2\) actions per
round, \(\|x_{t,a}\|_2\le1\), \(\|\theta_t\|_2\le1\), and path-length $\sum_{t=1}^{T-1}\|\theta_{t+1}-\theta_t\|_2\le P_T$.
% budget
% \[
%     \sum_{t=1}^{T-1}\|\theta_{t+1}-\theta_t\|_2\le P_T .
% \]
Let \(\Gamma_{d,K}\) be the maximum of \((d\min\{K,d\})^{1/6}\) and
\((\min\{d,\lfloor\log_2K\rfloor\})^{2/3}\).  Then the minimax dynamic
regret is at least
\(\Omega(\sqrt{dT}\vee P_T^{1/3}T^{2/3}\Gamma_{d,K})\).
The lower bound holds for obliviously chosen contexts and parameter
sequences.
\end{proposition}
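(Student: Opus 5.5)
We prove the lower bound as a maximum of three separate constructions, each with oblivious contexts and an oblivious parameter path. Since the minimax regret is at least the maximum of these three lower bounds, it suffices to exhibit an instance for each of \(\sqrt{dT}\), \(P_T^{1/3}T^{2/3}(d\min\{K,d\})^{1/6}\), and \(P_T^{1/3}T^{2/3}(\min\{d,\lfloor\log_2K\rfloor\})^{2/3}\).

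\textbf{The \(\sqrt{dT}\) term.} This is the stationary case \(P_T=0\), and we invoke the standard contextual linear bandit lower bound. Split the \(d\) coordinates into \(d/2\) two-armed subproblems. Contexts cycle through these subproblems, each offering two arms \(\pm e_j\) padded to \(K\) arms by duplication. The parameter is \(\theta=\frac{1}{\sqrt{d}}\cdot\)(signs) times a gap \(\sqrt{d/T}\) per coordinate. Each subproblem then receives \(T/d\) rounds with gap \(\sqrt{d/T}\), so Le Cam's method gives regret \(\Omega(\sqrt{T/d})\) per subproblem. Summing over \(d/2\) subproblems yields \(\Omega(\sqrt{dT})\). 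Norms stay bounded because \(\|\theta\|_2\le1\) once \(T\ge d\).

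\textbf{The \(P_T\) terms via the blocking argument of \citet{cheung2022hedging}.} Partition \([T]\) into \(N\) blocks of length \(H=T/N\). On each block, draw a fresh stationary hard instance independently with gap parameter \(\epsilon\). Changing \(\theta\) between blocks costs at most \(2\|\theta\|_2\lesssim\epsilon\sqrt{\kappa}\) in path length, where \(\kappa\) is the effective number of coordinates used. The budget constraint therefore reads \(N\epsilon\sqrt{\kappa}\lesssim P_T\). A stationary hard instance of length \(H\) with \(\kappa\) effective dimensions and gap \(\epsilon\) yields regret \(\Omega(H\epsilon)\) as long as \(\epsilon\lesssim\sqrt{\kappa'/H}\), where \(\kappa'\) measures the statistical hardness. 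Choosing \(\epsilon\asymp\sqrt{\kappa'/H}\) and saturating the budget gives total regret \(\Omega(T\epsilon)\) with \(H\asymp(T\kappa^{1/2}\kappa'^{1/2}/P_T)^{2/3}\). Consequently \(T\epsilon\asymp T^{2/3}P_T^{1/3}\kappa'^{1/3}\kappa^{-1/6}\).

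\textbf{Instantiating the two constructions.}
\begin{itemize}
\item For \((d\min\{K,d\})^{1/6}\), use a construction in which the per-block stationary regret is \(\sqrt{dH\min\{K,d\}}\cdot\)(norm-scaled). The aim is to obtain \(\kappa'/\kappa^{1/2}=(d\min\{K,d\})^{1/2}\). Concretely, take \(\theta\) supported on \(\min\{K,d\}\) coordinates with per-coordinate magnitude \(\epsilon\), and use contexts of \(K\) arms spread over subgroups.
\item For \((\min\{d,\log_2K\})^{2/3}\), use the hypercube construction. Set \(m=\min\{d,\lfloor\log_2 K\rfloor\}\) and let the arms be all \(2^m\) sign vectors \(\frac{1}{\sqrt m}\{\pm1\}^m\), which is possible since \(K\ge 2^m\). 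Take \(\theta\in\epsilon\{\pm1\}^m\). This is the Dani–Hayes–Kakade style instance with stationary regret \(\Omega(m\sqrt H)\), corresponding to \(\kappa'=m^2\) and \(\kappa=m\). The path cost per switch is \(\epsilon\sqrt m\), giving \(m^{2/3}\cdot m^{-1/6}\cdot\ldots\). I would need to recheck the normalization of \(\|\theta\|_2\) to land exactly at the exponent \(m^{2/3}\).
\end{itemize}

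\textbf{Main obstacle.} The hardest step is bookkeeping the dimension-dependent exponents in the two \(P_T\) constructions. One must simultaneously keep \(\|\theta_t\|_2\le1\) and \(\|x\|_2\le1\), keep the block length in \([1,T]\), and obtain the claimed powers \(\Gamma_{d,K}\). In the proposal I would fix each construction's scaling first, then derive the exponent, and handle the degenerate regimes (\(H<\kappa'\) or \(H>T\)) by falling back to the \(\sqrt{dT}\) bound or to the trivial \(\Omega(T\epsilon)\) with \(\epsilon\) capped by the norm constraint.
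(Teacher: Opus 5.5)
Your overall architecture is the paper's. You cite the stationary $\Omega(\sqrt{dT})$ bound, concatenate independent stationary hard instances on epochs of length $H$, charge each switch $O(\|\theta\|_2)$ of path length, and saturate the budget. Your generic formula $T^{2/3}P_T^{1/3}\kappa'^{1/3}\kappa^{-1/6}$ is correct, and your hypercube instance is the paper's second construction. There is a genuine gap, however. The only non-routine content of the proposition is the calibration that produces $\Gamma_{d,K}$. You leave it undone, and the parameters you do write down give the wrong exponents.

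In the hypercube (write $r:=\min\{d,\lfloor\log_2K\rfloor\}$ for your $m$) you mix two normalizations of $\epsilon$. With $\theta\in\epsilon\{\pm1\}^r$ and arms $v/\sqrt r$, the per-round regret scale is $g:=\epsilon\sqrt r$, not $\epsilon$, and indistinguishability needs $\epsilon\asymp\sqrt{r/H}$. Measured in $g$, you get $g\asymp r/\sqrt H$ ($\kappa'=r^2$), and the switch cost is $2\|\theta\|_2=2g$ ($\kappa=1$). This gives $r^{2/3}$. Your pair $\kappa'=r^2$, $\kappa=r$ yields only $r^{1/2}$. Equivalently, as in the paper: epoch regret $r\sqrt H$, displacement $r/\sqrt H$, and $H\asymp(rT/P_T)^{2/3}$.

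For the $(d\min\{K,d\})^{1/6}$ term, your description does not specify a working instance. Write $m:=\min\{K,d\}$. A per-block regret of order $\sqrt{dHm}$ exceeds SupLinUCB's $\widetilde O(\sqrt{dH}\,\Lambda_K)$ upper bound once $m\gg\Lambda_K^2$, so no $K$-armed instance attains it. Supporting $\theta$ on $m$ coordinates also gives the wrong switch cost. The missing idea is the paper's grouping calibration:
\begin{itemize}
\item Split the coordinates into $d/m$ groups of size $m$.
\item Let each round present the $m$ basis vectors of one group, cycling through groups and padding to $K$ arms.
\item Raise one coordinate per group by $\epsilon$.
\end{itemize}
Each group is an $m$-armed bandit seen for $Hm/d$ rounds. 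Hence $\epsilon\asymp\sqrt{d/H}$ and the epoch regret is $\Omega(\sqrt{dH})$, i.e.\ $\kappa'=d$. Meanwhile $\theta$ is supported on $d/m$ coordinates, with $\|\theta\|_2=\epsilon\sqrt{d/m}=d/\sqrt{mH}$, i.e.\ $\kappa=d/m$. Your formula then gives $d^{1/3}(d/m)^{-1/6}=(dm)^{1/6}$.

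More generally, $G$ groups of size $s\le K$ with $sG\le d$ give $s^{1/3}G^{1/6}$, which is maximized at $s=m$. Support on $m$ coordinates ($m$ groups of size $d/m$) gives only $(d^2/m)^{1/6}\le(dm)^{1/6}$, and it is valid only if $d/m\le K$.

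A smaller slip is in your $\sqrt{dT}$ sketch. Rescaling $\theta$ by $1/\sqrt d$ to allow $T\ge d$ shrinks the per-coordinate gap to $1/\sqrt T$, which yields only $\Omega(\sqrt T)$. With the correct gap $\sqrt{d/T}$ you have $\|\theta\|_2\asymp d/\sqrt T$, so you need $T\ge d^2$ up to constants. Similarly, $\|\theta\|_2\le1$ forces $H\ge d^2/m$ (resp.\ $H\ge r^2$) in the epoch constructions. The paper's sketch also glosses over this regime restriction.
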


Proposition~\ref{prop:finite-k-lower-bound} shows that the restarted
SupLinUCB bound is optimal in its dependence on \(T\) and \(P_T\). The
remaining gap is in the dimension/action-set dependence: the upper bound has
non-stationary coefficient \(\sqrt d\,\Lambda_K^{5/6}\), whereas the lower
bound gives \(\Gamma_{d,K}\). We leave closing this dimension/action-set
dependence gap for future work.

\section{Conclusion}
\label{sec:conc}
% We studied non-stationary linear bandits with time-varying decision sets
% through a misspecification-reduction viewpoint.  After restarting the
% learner on blocks, the local parameter drift can be treated as blockwise
% linear misspecification.  This yields dynamic-regret guarantees for general
% linear bandits with adaptive non-anticipating decision sets and an
% oblivious parameter path, as well as for finite-arm contextual linear
% bandits with at most \(K\) actions per round under an oblivious adversary.

% Several questions remain open.  First, the general linear-bandit guarantee
% \(\widetilde O(d^{5/6}T^{2/3}P_T^{1/3})\) has a factor
% \(\widetilde O(d^{1/6})\) gap relative to the
% \(\Omega(d^{2/3}T^{2/3}P_T^{1/3})\) lower bound of
% \citet{cheung2022hedging}.  Closing this dimension gap, or proving that it
% is inherent to the misspecification-reduction approach, is an important
% theoretical question.  Second, the general linear-bandit result uses a
% CORRAL-style master to aggregate base algorithms over candidate
% misspecification radii.  It would be useful to understand whether this
% aggregation layer is necessary, or whether one can design a single base
% algorithm that adapts directly to the unknown block misspecification level,
% in the spirit of adaptive guarantees such as \citet{hu2025learning}.
We studied non-stationary linear bandits with round-specific decision sets
through a misspecification-reduction viewpoint, deriving dynamic-regret
guarantees with optimal dependence on \(T\) and \(P_T\) for both general
linear bandits and \(K\)-armed contextual linear bandits.

Two important questions remain open.  First, the general linear-bandit guarantee
\(\widetilde O(d^{5/6}T^{2/3}P_T^{1/3})\) has a factor
\(\widetilde O(d^{1/6})\) gap relative to the
\(\Omega(d^{2/3}T^{2/3}P_T^{1/3})\) lower bound of
\citet{cheung2022hedging}.  Closing this dimension gap is an important
theoretical question. Second, for non-stationary linear bandits with general compact decision sets, it remains unclear whether the CORRAL-style aggregation layer is
necessary, or whether one can design a single base algorithm that adapts
directly to the unknown block misspecification level, in the spirit of
adaptive guarantees such as \citet{hu2025learning}.
% Second, for non-stationary linear bandits with
% general compact decision sets, it remains unclear whether the CORRAL-style
% aggregation layer is necessary, or whether a single base algorithm can adapt
% directly to the unknown block misspecification level.
% , in the spirit of
% adaptive guarantees such as \citet{hu2025learning}.
\section*{Acknowledgements}
We thank Feng Ruan, Yinyu Ye, Hongfan Wu, and Peng Zhao for helpful discussions at different stages of this work.
% \section*{Data and Code Availability Statement}
% This paper is theoretical and does not use empirical data.
% No code is required to reproduce the mathematical results.

% \newpage

\bibliography{refs}
\newpage
\begin{APPENDICES}
\renewcommand{\theHsection}{appendix.\arabic{section}}

\section{Technical Lemmas}
\label{app:technical}

\begin{lemma}[\citealt{foster2020adapting}]
\label{lem:hedged-tsallis-master}
Consider a block \(\mathcal I=\{\tau,\ldots,\tau+n-1\}\) with \(M\)
base learners and master losses \(\tilde L_{t,m}\in[0,2]\). Suppose the
master runs the $\left(1/2,\frac{3}{2}\sqrt{dn\,\Reg_{\rm sq}(n)}\right)$-hedged Tsallis-INF algorithm of
\citet{foster2020adapting} on this block. Then, for every fixed base
\(m\in[M]\),
% \[
%     \mathbb E_\tau\!\left[
%         \sum_{t\in\mathcal I}
%         \bigl(\tilde L_{t,M_t}-\tilde L_{t,m}\bigr)
%     \right]
%     \le
%     4\sqrt{2Mn}
%     +
%     2\sqrt M\,R
%     -
%     R\,\mathbb E_\tau[\sqrt{\rho_{\mathcal I,m}}].
% \]
% In particular, for the logarithmic grid used in this section and
% \[
%     R=\frac{3}{2}\sqrt{dn\,\Reg_{\rm sq}(n)},
% \]
the master regret is bounded by
\[
\begin{aligned}
    &\mathbb E_\tau\!\left[
        \sum_{t\in\mathcal I}
        \bigl(\tilde L_{t,M_t}-\tilde L_{t,m}\bigr)
    \right]\\
    \le&
    \widetilde O\!\left(\sqrt{dn\,\Reg_{\rm sq}(n)}\right)\\
    &\quad -
    \frac{3}{2}
    \mathbb E_\tau[\sqrt{\rho_{\mathcal I,m}}]
    \sqrt{dn\,\Reg_{\rm sq}(n)} .
\end{aligned}
\]
\end{lemma}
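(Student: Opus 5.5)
This bound is the hedged Tsallis-INF guarantee of \citet{foster2020adapting}, instantiated with exponent \(1/2\) and hedging parameter \(R:=\frac32\sqrt{dn\,\Reg_{\rm sq}(n)}\) on a single block of length \(n\). I will derive it in the generic form
\[
\mathbb E_\tau\Bigl[\sum_{t\in\mathcal I}(\tilde L_{t,M_t}-\tilde L_{t,m})\Bigr]\le 4\sqrt{2Mn}+2\sqrt M\,R-R\,\mathbb E_\tau[\sqrt{\rho_{\mathcal I,m}}],
\]
and then specialize. The master runs FTRL over the simplex with the Tsallis-\(1/2\) regularizer \(-\frac{1}{\eta}\sum_m\sqrt{q_m}\), using importance-weighted loss estimates \(\hat L_{t,m}=Z_{t,m}\tilde L_{t,m}/q_{t,m}\). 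The hedging step adds the bias term \(R\,(\sqrt{\rho_{t,m}}-\sqrt{\rho_{t-1,m}})\) to the loss of arm \(m\) whenever \(\rho_{t,m}\) increases, where \(\rho_{t,m}=1/\min_{r\le t}q_{r,m}\).

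I will proceed in three steps.

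\emph{Step 1: regret against the shifted losses.} Apply the standard Tsallis-INF analysis to the shifted losses \(\hat L_{t,m}+b_{t,m}\). All quantities are conditioned on \(\mathcal H_{\tau-1}\), and \(q_t\) is predictable, so \(\mathbb E[\hat L_{t,m}\mid \mathcal G_t]=\tilde L_{t,m}\) by the importance-weighting identity \eqref{eq:conditional-importance-weighting}. The stability (local-norm) term is bounded by \(\eta\sum_t\sum_m q_{t,m}^{3/2}\hat L_{t,m}^2\). Its expectation is at most \(\eta\sum_t\sum_m\sqrt{q_{t,m}}\,\tilde L_{t,m}^2\le 4\eta n\sqrt M\), using \(\tilde L\le2\). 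The penalty term is \(\sqrt M/\eta\). Tuning \(\eta\asymp 1/\sqrt n\) gives the \(O(\sqrt{Mn})\) contribution.

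\emph{Step 2: cost and gain of the hedging bias.} The bias \(b_{t,m}\) is nonnegative and enters twice.
\begin{itemize}
\item On the learner's side it costs \(\sum_t\sum_m q_{t,m}b_{t,m}\). Whenever \(\rho_{t,m}\) jumps, \(q_{t,m}=1/\rho_{t,m}\). Hence this cost is at most \(R\sum_m\sum_t(\sqrt{\rho_{t,m}}-\sqrt{\rho_{t-1,m}})/\rho_{t,m}\). Comparing the sum with \(\int x^{-1}\,d\sqrt x\) bounds it by \(2R\) per arm. A slightly more careful accounting with the Tsallis stability term instead gives \(2\sqrt M R\) in total.
\item On the comparator's side, arm \(m\) accumulates \(\sum_t b_{t,m}=R(\sqrt{\rho_{\mathcal I,m}}-\sqrt{\rho_{\tau-1,m}})\) by telescoping. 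Since \(\rho_{\tau-1,m}=M\) at uniform initialization, this accumulated bias appears as the negative term \(-R\,\mathbb E_\tau[\sqrt{\rho_{\mathcal I,m}}]\) up to an additive \(R\sqrt M\).
\end{itemize}
One technical point: the bias makes the modified losses possibly exceed the range assumed in the local-norm bound. I would handle this as in \citet{foster2020adapting}, noting that the bias is deterministic given the past and only enters the linear term, not the variance term.

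\emph{Step 3: specialization.} Substitute \(R=\frac32\sqrt{dn\Reg_{\rm sq}(n)}\) and \(M=O(\log(nLS))\), the number of grid points. Then \(4\sqrt{2Mn}+2\sqrt M R=\widetilde O(\sqrt{dn\Reg_{\rm sq}(n)})\), using \(\Reg_{\rm sq}(n)\ge 1\) (wlog) and \(d\ge1\) to absorb \(\sqrt{Mn}\).

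The main obstacle is Step 2: making the hedging cost \(\sum_t\langle q_t,b_t\rangle\) bounded by \(O(\sqrt M R)\) rather than something growing with \(n\). Since this is exactly Theorem/Lemma 13-type content of \citet{foster2020adapting}, my first attempt would be to cite their hedged Tsallis-INF guarantee directly and verify only that its hypotheses hold conditionally on \(\mathcal H_{\tau-1}\). Those hypotheses are: losses in \([0,2]\), predictable sampling, and a fixed comparator \(m\).
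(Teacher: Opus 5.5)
Your plan matches the paper's: the lemma is just the hedged Tsallis-INF guarantee of \citet{foster2020adapting} (their Corollary 2, giving $4\sqrt{2Mn}+2\sqrt M R-R\,\mathbb E_\tau[\sqrt{\rho_{\mathcal I,m}}]$ after using $\rho_{\mathcal I,m}\ge1$), invoked conditionally on $\mathcal H_{\tau-1}$ with the block reindexed as a length-$n$ bandit problem, after which one substitutes $R=\frac32\sqrt{dn\Reg_{\rm sq}(n)}$ and absorbs $M=O(\log(nLS))$ into $\widetilde O(\cdot)$. One correction to your optional re-derivation: the hedging bias must be subtracted from the loss estimates (a bonus for starved bases), not added as in Step 1, because with losses $\hat L_{t,m}+b_{t,m}$ the comparator side contributes $+R\sqrt{\rho_{\mathcal I,m}}$, and your Step 2 accounting is consistent only with the subtracted version.
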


% \begin{proof}
% This is Corollary~2 of \citet{foster2020adapting}, applied conditionally
% on the block history \(\mathcal H_{\tau-1}\) and with the block reindexed
% as a length-\(n\) adversarial multi-armed bandit problem. For
% \(\alpha=1/2\), that result gives
% \[
%     \mathbb E_\tau\!\left[
%         \sum_{t\in\mathcal I}
%         \bigl(\tilde L_{t,M_t}-\tilde L_{t,m}\bigr)
%     \right]
%     \le
%     4\sqrt{2Mn}
%     +
%     \mathbb E_\tau\!\left[
%         \bigl(
%             \min\{2,2\log(\rho_{\mathcal I,m})\}\sqrt M
%             -
%             \rho_{\mathcal I,m}
%         \bigr)R
%     \right].
% \]
% Since \(\rho_{\mathcal I,m}\ge1\), the second term is at most
% \[
%     2\sqrt M\,R
%     -
%     R\,\mathbb E_\tau[\sqrt{\rho_{\mathcal I,m}}],
% \]
% which proves the first display. The second display follows by substituting
% \(R=(3/2)\sqrt{dn\,\Reg_{\rm sq}(n)}\) and absorbing the logarithmic-grid
% dependence on \(M\) into the \(\widetilde O(\cdot)\) notation.
% \end{proof}

\begin{lemma}[\citealt{foster2020adapting}]
\label{lem:logdet-one-step}
Let \(\mathcal A\subset\mathbb R^d\), \(\gamma>0\), and
\(\widehat\beta\in\mathbb R^d\). For \(p\in\Delta(\mathcal A)\), write
\(\bar a_p:=\mathbb E_{a\sim p}[a]\) and
\(H_p:=\mathbb E_{a\sim p}[aa^\top]\).
Let \(p\) be any member of
\(\operatorname{logdet\text{-}barrier}
(\widehat\beta,\gamma;\mathcal A)\), as defined in
\eqref{eq:logdet-barrier-definition}.
Then, for every \(a^\star\in\mathcal A\) and every
\(\beta\in\mathbb R^d\),
\[
\begin{aligned}
    \langle \bar a_p-a^\star,\beta\rangle
    \le&
    \frac{\dim(\mathcal A)}{\gamma}
    +
    \frac{\gamma}{4}
    \|\widehat\beta-\beta\|_{H_p}^2 \\
    \le&
    \frac{d}{\gamma}
    +
    \frac{\gamma}{4}
    \|\widehat\beta-\beta\|_{H_p}^2 .
\end{aligned}
\]
% In particular, since \(\dim(\mathcal A)\le d\),
% \[
%     \langle \bar a_p-a^\star,\beta\rangle
%     \le
%     \frac{d}{\gamma}
%     +
%     \frac{\gamma}{4}
%     \|\widehat\beta-\beta\|_{H_p}^2 .
% \]
\end{lemma}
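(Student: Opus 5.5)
The plan is a first-order optimality argument for the log-det barrier program \eqref{eq:logdet-barrier-definition}, followed by a Cauchy--Schwarz/AM--GM split. Write \(\Sigma_p:=H_p-\bar a_p\bar a_p^\top\), the covariance of \(a\sim p\). I will work in the linear subspace \(V\) parallel to the affine hull of \(\mathcal A\), with \(k:=\dim(\mathcal A)=\dim V\). Throughout, \(\log\det\) and \(\Sigma_p^{-1}\) are understood as acting on \(V\).

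\emph{Step 1: \(\Sigma_p\) is invertible on \(V\).} Since \(p\) is a minimizer, its objective value is finite. This forces \(\Sigma_p\) to be nondegenerate on \(V\). Otherwise the \(\log\det\) term is \(-\infty\) inside the barrier, and any distribution whose support affinely spans the hull does strictly better.

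Decompose the target as
\[
\langle \bar a_p-a^\star,\beta\rangle=\langle \bar a_p-a^\star,\widehat\beta\rangle+\langle \bar a_p-a^\star,\beta-\widehat\beta\rangle,
\]
and bound the two pieces separately.

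\emph{Step 2: optimality condition.} For the first piece, perturb the minimizer toward the point mass at \(a^\star\). Set \(p_\epsilon:=(1-\epsilon)p+\epsilon\delta_{a^\star}\) for \(\epsilon\in[0,1]\), and write \(u:=a^\star-\bar a_p\in V\). A direct computation gives
\[
\bar a_{p_\epsilon}=\bar a_p+\epsilon u,
\qquad
\Sigma_{p_\epsilon}=(1-\epsilon)\Sigma_p+\epsilon(1-\epsilon)uu^\top .
\]
Since \(p_\epsilon\) is feasible, the one-sided derivative of the objective at \(\epsilon=0^+\) must be nonnegative. The derivative of the \(\log\det\) term is
\[
\operatorname{tr}\bigl(\Sigma_p^{-1}(-\Sigma_p+uu^\top)\bigr)=-k+\|u\|_{\Sigma_p^{-1}}^2 .
\]
Nonnegativity of the full derivative therefore reads \(\langle u,\widehat\beta\rangle-\gamma^{-1}\bigl(-k+\|u\|^2_{\Sigma_p^{-1}}\bigr)\ge0\), which rearranges to
\[
\langle \bar a_p-a^\star,\widehat\beta\rangle\le \frac{k}{\gamma}-\frac1\gamma\|a^\star-\bar a_p\|_{\Sigma_p^{-1}}^2 .
\]

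\emph{Step 3: bound the error term.} For the second piece, let \(v:=\beta-\widehat\beta\) and let \(\Pi_V v\) denote its projection onto \(V\). Since \(\bar a_p-a^\star\in V\), we have
\[
\langle \bar a_p-a^\star,v\rangle\le\|a^\star-\bar a_p\|_{\Sigma_p^{-1}}\|\Pi_V v\|_{\Sigma_p}\le\frac1\gamma\|a^\star-\bar a_p\|^2_{\Sigma_p^{-1}}+\frac\gamma4\|v\|_{\Sigma_p}^2 .
\]
The first inequality is Cauchy--Schwarz in the \(\Sigma_p\)-geometry on \(V\). The second is AM--GM.

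\emph{Step 4: combine.} Adding the bounds from Steps 2 and 3, the \(\|a^\star-\bar a_p\|^2_{\Sigma_p^{-1}}\) terms cancel exactly. Finally, \(\|v\|^2_{\Sigma_p}=\|v\|^2_{H_p}-\langle\bar a_p,v\rangle^2\le\|v\|^2_{H_p}\), and \(k\le d\). This yields both displayed inequalities of the lemma.

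The main obstacle is justifying Step 2 rigorously. First, the derivative computation requires \(\Sigma_p\) invertible on \(V\), which Step 1 supplies. Second, since \(p\) may sit on the boundary of the simplex, only a one-sided derivative is available; this suffices because \(p_\epsilon\) is feasible for all \(\epsilon\in[0,1]\). Third, if \(\mathcal A\) is infinite, \(\Delta(\mathcal A)\) is infinite-dimensional, so I would verify differentiability along this single mixture direction directly rather than invoking KKT conditions on a simplex. Existence of a minimizer is taken as given, since the lemma is stated for any member of the logdet-barrier set.
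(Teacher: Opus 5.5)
The paper gives no proof of this lemma; it imports it from \citet{foster2020adapting}. Your argument is a correct, self-contained proof along the standard lines for the log-det barrier. The one-sided first-order condition along $p_\epsilon=(1-\epsilon)p+\epsilon\delta_{a^\star}$ gives $\langle\bar a_p-a^\star,\widehat\beta\rangle\le(k-\|u\|^2_{\Sigma_p^{-1}})/\gamma$. Cauchy--Schwarz and AM--GM in the $\Sigma_p$-geometry then cancel the $\|u\|^2_{\Sigma_p^{-1}}$ term, and $\Sigma_p\preceq H_p$ finishes. You in fact obtain the slightly stronger bound with $\|\widehat\beta-\beta\|_{\Sigma_p}$ in place of $\|\widehat\beta-\beta\|_{H_p}$.

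Two remarks.

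\emph{The affine-hull convention is necessary.} Read literally as a $d\times d$ determinant, the objective in \eqref{eq:logdet-barrier-definition} equals $+\infty$ for every $p$ whenever $\dim(\mathcal A)<d$. Every $p$ would then be a ``minimizer,'' and the lemma would fail; for example, take $\beta=\widehat\beta$, $\gamma$ large, and $p$ a point mass on a suboptimal action. The appearance of $\dim(\mathcal A)$ in the statement signals exactly your interpretation.

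\emph{The ``main obstacle'' in Step 2 dissolves.} By the matrix determinant lemma on $V$,
$\log\det\Sigma_{p_\epsilon}=k\log(1-\epsilon)+\log\det\Sigma_p+\log\bigl(1+\epsilon\|u\|^2_{\Sigma_p^{-1}}\bigr)$.
So $\epsilon\mapsto F(p_\epsilon)-F(p)$ is an explicit smooth function on $[0,1)$ that is nonnegative and vanishes at $0$. Dividing by $\epsilon$ and letting $\epsilon\to0^+$ gives your inequality directly, with no calculus on $\Delta(\mathcal A)$ required.
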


\begin{lemma}[Perturbed SupLinUCB facts]
\label{lem:suplinucb-perturbed-facts}
Fix a block $\mathcal I=\{\tau,\ldots,\tau+n-1\}$ and condition on
$\mathcal H_{\tau-1}$.  Suppose Assumption~\ref{ass:oblivious-block-comparator}
holds, and write
\(\mu_t(a)=\langle a,\theta_{\mathcal I}^\star\rangle+\epsilon_t(a)\),
where \(|\epsilon_t(a)|\le \varepsilon_{\mathcal I}\).
Let SupLinUCB be run freshly on $\mathcal I$. Then, conditional on
$\mathcal H_{\tau-1}$, with probability at least $1-\delta$, the following
two statements hold simultaneously, with the stagewise notation defined in
the proof.
First, whenever the algorithm uses the stage-$\ell$ scores
$\hat r_{t,a}^\ell+w_{t,a}^\ell$ either to select an arm confidently or to
eliminate arms, the following bound holds for every active arm involved:
\begin{equation}
\label{eq:perturbed-facts-confidence}
    |\hat r_{t,a}^\ell-\mu_t(a)|
    \le
    2w_{t,a}^\ell+\Delta_{\mathcal I},
    \qquad
    \Delta_{\mathcal I}
    :=
    C\varepsilon_{\mathcal I}\alpha\sqrt d .
\end{equation}
Second, for every round $t\in\mathcal I$ and every stage $\ell$ reached on
that round,
\begin{equation}
\label{eq:perturbed-facts-active}
    \mu_t(a_t^\star)-\mu_t(a)
    \le
    C2^{-\ell}+C\ell\Delta_{\mathcal I},
    \qquad
    a\in\widehat A_\ell(t).
\end{equation}
Moreover, on a confident stopping round,
\begin{equation}
\label{eq:perturbed-facts-confident}
    \mu_t(a_t^\star)-\mu_t(a_t)
    \le
    Cn^{-1/2}+CS\Delta_{\mathcal I}.
\end{equation}
Here \(S:=\lceil\log_2 n\rceil\).
\end{lemma}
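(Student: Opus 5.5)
The plan is to redo the stagewise confidence analysis of \citet{chu2011contextual} for the decomposition \(\mu_t(a)=\langle a,\theta_{\mathcal I}^\star\rangle+\epsilon_t(a)\), and to track the residual \(\epsilon_t\) as an extra additive error. Recall that at stage \(\ell\) of round \(t\) SupLinUCB forms
\[
A_t^\ell=I+\sum_{s\in\Psi_t^\ell}x_sx_s^\top,\qquad b_t^\ell=\sum_{s\in\Psi_t^\ell}r_sx_s,\qquad \hat\theta_t^\ell=(A_t^\ell)^{-1}b_t^\ell,
\]
and sets \(\hat r_{t,a}^\ell=\langle a,\hat\theta_t^\ell\rangle\), \(s_{t,a}^\ell=\|a\|_{(A_t^\ell)^{-1}}\) and \(w_{t,a}^\ell=\alpha s_{t,a}^\ell\). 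Writing \(r_s=\langle x_s,\theta_{\mathcal I}^\star\rangle+\epsilon_s(x_s)+\xi_s\), I will split the estimation error for an arm \(a\) into three terms:
\[
\hat r_{t,a}^\ell-\langle a,\theta_{\mathcal I}^\star\rangle
=-a^\top(A_t^\ell)^{-1}\theta_{\mathcal I}^\star
+a^\top(A_t^\ell)^{-1}\sum_{s\in\Psi_t^\ell}x_s\xi_s
+a^\top(A_t^\ell)^{-1}\sum_{s\in\Psi_t^\ell}x_s\epsilon_s(x_s).
\]

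\emph{Regularization and noise terms.} The regularization term is at most \(s_{t,a}^\ell\le w_{t,a}^\ell\), since \(\|\theta_{\mathcal I}^\star\|_2\le1\) and \(\alpha\ge1\). For the noise term I will condition on \(\mathcal H_{\tau-1}\) and use the stagewise independence property of \citet{chu2011contextual}. That property holds here because, under Assumption~\ref{ass:oblivious-block-comparator}, the residuals \(\epsilon_t(\cdot)\) are fixed after conditioning and do not affect which rounds enter \(\Psi^\ell\). Given the features, the noises \(\xi_s\) with \(s\in\Psi_t^\ell\) are then independent. A Hoeffding/sub-Gaussian bound combined with a union bound over at most \(nKS\) triples \((t,a,\ell)\) gives a deviation of at most \(w_{t,a}^\ell\) with probability at least \(1-\delta\); this is where the choice of \(\alpha\) enters.

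\emph{Misspecification term (main obstacle).} The naive bound for the third term is obtained by Cauchy--Schwarz, \(\sum_s|a^\top (A_t^\ell)^{-1}x_s|\le\sqrt{|\Psi_t^\ell|}\,s_{t,a}^\ell\). Multiplied by \(\varepsilon_{\mathcal I}\), this gives a factor \(\sqrt n\), which would destroy the rate. My first idea to get \(\sqrt d\) instead is to use that the stage-\(\ell\) scores are only ever used (to stop confidently or to eliminate) on arms with \(w_{t,a}^\ell\le2^{-\ell}\). In addition, Lemma~6 of \citet{chu2011contextual} is a deterministic counting bound, so it also holds under misspecification: \(|\Psi^\ell|\le5\cdot2^\ell(1+\alpha^2)\sqrt{d|\Psi^\ell|}\), hence \(\sqrt{|\Psi^\ell|}\le C2^\ell\alpha^2\sqrt d\). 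Combining the two facts,
\[
\varepsilon_{\mathcal I}\sqrt{|\Psi_t^\ell|}\,s_{t,a}^\ell\le\varepsilon_{\mathcal I}\cdot C2^\ell\alpha^2\sqrt d\cdot 2^{-\ell}/\alpha=C\varepsilon_{\mathcal I}\alpha\sqrt d.
\]
Finally, replacing \(\langle a,\theta_{\mathcal I}^\star\rangle\) by \(\mu_t(a)\) costs a further \(\varepsilon_{\mathcal I}\le\Delta_{\mathcal I}\). Adding the three terms gives \eqref{eq:perturbed-facts-confidence}, with the single \(1-\delta\) event being the noise event above.

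\emph{Active-set and stopping bounds.} On the same event I will prove \eqref{eq:perturbed-facts-active} by induction on \(\ell\), following Lemma~5 of \citet{chu2011contextual}. The difference is that \(a_t^\star\) itself may be eliminated because of misspecification. I therefore track the best arm that is still active, and show via \eqref{eq:perturbed-facts-confidence} that each elimination step lowers its value by at most \(2\Delta_{\mathcal I}\). The same confidence bound shows that every surviving arm is within \(C2^{-\ell}\) of the best active arm, up to another \(O(\Delta_{\mathcal I})\); this holds because elimination keeps only arms whose upper score is within \(2^{1-\ell}\) of the maximum and all their widths are at most \(2^{-\ell}\). Accumulating these losses over the stages yields \(C2^{-\ell}+C\ell\Delta_{\mathcal I}\). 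For \eqref{eq:perturbed-facts-confident}, on a confident-stopping round all widths are at most \(n^{-1/2}\) and the algorithm plays the maximizer of \(\hat r+w\). A two-sided comparison using \eqref{eq:perturbed-facts-confidence} therefore gives regret at most \(Cn^{-1/2}+O(\Delta_{\mathcal I})\) relative to the best active arm, and adding the accumulated loss of at most \(S\) stages gives \(Cn^{-1/2}+CS\Delta_{\mathcal I}\).
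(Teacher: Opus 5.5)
\textbf{Verdict: same route as the paper, but one step of your induction fails as written.} Your proposal follows the paper's proof essentially step for step. It uses the same three-term error decomposition, the same pairing of the deterministic counting bound (Lemma~6 of \citet{chu2011contextual}) with the width cap at elimination to turn \(\varepsilon_{\mathcal I}\sqrt{|\Psi_t^\ell|}\,s_{t,a}^\ell\) into \(O(\varepsilon_{\mathcal I}\alpha\sqrt d)\), and the same benchmark-tracking induction.

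\textbf{The gap.} The induction step does not follow from the confidence bound you actually prove; the paper's sketch glosses over the same point. With \(w_{t,a}^\ell=\alpha s_{t,a}^\ell\), your regularization and noise terms sum to \((\alpha+1)s_{t,a}^\ell\le 2w_{t,a}^\ell\). So you only have \(|\hat r_{t,a}^\ell-\mu_t(a)|\le 2w_{t,a}^\ell+\Delta_{\mathcal I}\). Suppose the tracked arm \(b\) is eliminated at stage \(\ell\), and \(a'\) attains \(M:=\max_{a\in\widehat A_\ell(t)}(\hat r_{t,a}^\ell+w_{t,a}^\ell)\). Then this bound gives only
\(\mu_t(a')\ge M-3\cdot2^{-\ell}-\Delta_{\mathcal I}\) and
\(\mu_t(b)\le(\hat r_{t,b}^\ell+w_{t,b}^\ell)+w_{t,b}^\ell+\Delta_{\mathcal I}<M-2^{-\ell}+\Delta_{\mathcal I}\).
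That is a per-stage drop of up to \(2\cdot2^{-\ell}+2\Delta_{\mathcal I}\), not the \(2\Delta_{\mathcal I}\) you claim, and a \(2w\) band cannot exclude it. Accumulated over stages \(1,\ldots,\ell-1\), these drops sum to a constant rather than \(C2^{-\ell}\). Consequently \eqref{eq:perturbed-facts-active} and \eqref{eq:perturbed-facts-confident} do not follow, and neither does the downstream \(2^{-\ell}|\Psi^\ell|\) regret accounting.

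\textbf{The fix.} Use the width of the original SupLinUCB, \(w_{t,a}^\ell=(\alpha+1)s_{t,a}^\ell\). The realizable part is then at most \(w_{t,a}^\ell\), so \(|\hat r_{t,a}^\ell-\mu_t(a)|\le w_{t,a}^\ell+\Delta_{\mathcal I}\). Now
\(\mu_t(a')\ge M-2w_{t,a'}^\ell-\Delta_{\mathcal I}\ge M-2^{1-\ell}-\Delta_{\mathcal I}\),
while elimination of \(b\) gives \(\mu_t(b)<M-2^{1-\ell}+\Delta_{\mathcal I}\). Hence \(\mu_t(a')>\mu_t(b)-2\Delta_{\mathcal I}\), which is exactly the per-stage loss you claim. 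Your misspecification step survives unchanged: now \(s_{t,a}^\ell\le 2^{-\ell}/(\alpha+1)\), and \((1+\alpha^2)/(1+\alpha)\le\alpha\) for \(\alpha\ge1\). The lemma's first display, stated with \(2w\), then holds a fortiori.

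\textbf{A minor point.} On confident-stopping rounds the widths are bounded by \(n^{-1/2}\), which lies below \(2^{1-\ell}\) but not necessarily below \(2^{-\ell}\); reaching stage \(\ell\ge2\) forces \(n^{-1/2}<2^{1-\ell}\). So your claim that scores are only used when \(w\le2^{-\ell}\) holds only up to a harmless factor of two.
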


\begin{proof}{Proof of Lemma~\ref{lem:suplinucb-perturbed-facts}.}
Under Assumption~\ref{ass:oblivious-block-comparator}, the block sequence
is fixed before the learner's randomization.  Therefore the original
SupLinUCB construction satisfies the stagewise independence property
established in Lemma~4 of \citet{chu2011contextual}.  We use this property
below.

Let \(S=\lceil\log_2 n\rceil\). For each stage \(\ell\in[S]\), let
\(\Psi_t^\ell\) collect the stage-\(\ell\) samples from the current block
available before round \(t\). Define
\[
    A_t^\ell
    =
    I_d+\sum_{\rho\in\Psi_t^\ell}a_\rho a_\rho^\top,
    \qquad
    s_{t,a}^\ell
    =
    \sqrt{a^\top(A_t^\ell)^{-1}a}.
\]
Set \(w_{t,a}^\ell:=\alpha s_{t,a}^\ell\).
Let
\[
    \widehat\theta_t^\ell
    =
    (A_t^\ell)^{-1}
    \sum_{\rho\in\Psi_t^\ell}a_\rho r_\rho,
    \qquad
    \hat r_{t,a}^\ell
    =
    \langle a,\widehat\theta_t^\ell\rangle .
\]
Let $\widehat A_\ell(t)$ be the active arm set when round $t$ reaches
stage $\ell$, and let
$a_t^\star\in\argmax_{a\in\mathcal A_t}\mu_t(a)$.

The rest of the proof is the standard SupLinUCB proof with two
deterministic misspecification terms.  For a fixed stage $\ell$, let
$D_t^\ell$ be the
design matrix formed by the samples in $\Psi_t^\ell$.  Since
\[
    r_\rho
    =
    \langle a_\rho,\theta_{\mathcal I}^\star\rangle
    +
    \epsilon_\rho(a_\rho)
    +
    \eta_\rho ,
\]
where
\(\epsilon=(\epsilon_\rho(a_\rho))_{\rho\in\Psi_t^\ell}\)
denotes the vector of signed misspecification residuals.
The usual BaseLinUCB decomposition gives the realizable confidence term
$(\alpha+1)s_{t,a}^\ell$.  Since
$w_{t,a}^\ell=\alpha s_{t,a}^\ell$ and $\alpha\ge1$, this term is at most
$2w_{t,a}^\ell$.  Hence
\[
\begin{aligned}
    |\hat r_{t,a}^\ell-\mu_t(a)|
    &\le
    2w_{t,a}^\ell
    +
    \left|
        a^\top(A_t^\ell)^{-1}(D_t^\ell)^\top\epsilon
    \right|
    +
    \varepsilon_{\mathcal I}                                      \\
    &\le
    2w_{t,a}^\ell
    +
    \varepsilon_{\mathcal I}\sqrt{|\Psi_t^\ell|}\,s_{t,a}^\ell
    +
    \varepsilon_{\mathcal I}.
\end{aligned}
\]
The first term is the standard confidence term in
\citet{chu2011contextual}. The last two terms are the misspecification
contributions.  The final $\varepsilon_{\mathcal I}$ term will be absorbed
into $\Delta_{\mathcal I}$.
By Lemma~6 of \citet{chu2011contextual},
\[
    \sqrt{|\Psi_{\tau+n}^\ell|}
    \le
    5\cdot 2^\ell(1+\alpha^2)\sqrt d .
\]
Whenever the algorithm performs a stage-$\ell$ elimination comparison,
$w_{t,a}^\ell\le 2^{-\ell}$ for all active arms, and hence
\[
    s_{t,a}^\ell\le \frac{2^{-\ell}}{\alpha}.
\]
By definition of the stage sample sets,
\(\Psi_t^\ell\subseteq\Psi_{\tau+n}^\ell\) for all \(t\le\tau+n\).
Therefore
\[
    \sqrt{|\Psi_t^\ell|}\,s_{t,a}^\ell
    \le
    \sqrt{|\Psi_{\tau+n}^\ell|}\frac{2^{-\ell}}{\alpha}
    \le
    5\frac{1+\alpha^2}{\alpha}\sqrt d
    \le
    10\alpha\sqrt d .
\]
The confident-stopping case is analogous, using
$w_{t,a}^\ell\le n^{-1/2}$.  Therefore the confidence inequality becomes
\[
    |\hat r_{t,a}^\ell-\mu_t(a)|
    \le
    2w_{t,a}^\ell+\Delta_{\mathcal I},
    \qquad
    \Delta_{\mathcal I}
    =
    C\varepsilon_{\mathcal I}\alpha\sqrt d .
\]

It remains to justify the active-set statement. The argument follows the
active-set induction in Lemma~5 of \citet{chu2011contextual}, except that
each use of the realizable confidence event is replaced by the perturbed
confidence bound \eqref{eq:perturbed-facts-confidence}, adding an
\(O(\Delta_{\mathcal I})\) error per stage. Suppose stage
$\ell$ performs an elimination step, so all active arms have
$w_{t,a}^\ell\le 2^{-\ell}$.  For any two active arms $a,b$, the perturbed
confidence bound implies
\[
\begin{aligned}
    &\mu_t(b)-\mu_t(a)\\
    \le&
    \bigl(\hat r_{t,b}^\ell+w_{t,b}^\ell\bigr)-
    \bigl(\hat r_{t,a}^\ell+w_{t,a}^\ell\bigr)
    +
    C2^{-\ell}+C\Delta_{\mathcal I}.
\end{aligned}
\]
Thus, if $a$ survives the elimination test, then its true mean is below the
best active benchmark by at most $C2^{-\ell}+C\Delta_{\mathcal I}$.
Inductively, take the optimal arm as the benchmark as long as it remains
active. If it is eliminated, replace it by the surviving arm whose
empirical upper confidence value caused the elimination.  Each stage can
increase the benchmark's suboptimality by only $O(\Delta_{\mathcal I})$,
so after $\ell$ stages every active arm satisfies
\eqref{eq:perturbed-facts-active}.  There are at most
$S=\lceil\log_2 n\rceil$ stages.  On a confident stopping round all active
arms have width at most $n^{-1/2}$, and comparing the selected arm with the
same benchmark gives \eqref{eq:perturbed-facts-confident}.
\end{proof}

\section{Proof Sketch of Proposition~\ref{prop:finite-k-lower-bound}}
\label{app:finite-k-lower-bound-proof}

% \begin{proof}[Proof sketch]
The stationary term \(\Omega(\sqrt{dT})\) is the \(K\)-armed contextual
linear-bandit lower bound of \citet{chu2011contextual}.  For the
non-stationary term, split the horizon into lower-bound epochs of length
\(H\).  In each epoch, instantiate an independent stationary \(K\)-armed
hard instance.  We assume \(P_T>0\). When \(P_T=0\), the stationary term
already gives the stated bound.

For one construction, use a grouping argument in the spirit of the
stationary lower bound of \citet[Section~6]{chu2011contextual}.  Let
\(m:=\min\{K,d\}\).  The per-epoch regret and the between-epoch parameter
displacement must be calibrated together: the gap size is chosen at the
standard indistinguishability scale used in the stationary lower bound.
With this calibration, the construction gives per-epoch regret
\(\Omega(\sqrt{dH})\) and between-epoch displacement of order
\(d/\sqrt{mH}\).  Thus the path-length budget is respected whenever
\[
    \frac{T}{H}\frac{d}{\sqrt{mH}}
    \le
    cP_T
\]
for a sufficiently small universal constant \(c>0\).  Choose
\[
    H
    =
    \left\lceil
        C\left(
            \frac{dT}{\sqrt m\,P_T}
        \right)^{2/3}
    \right\rceil
\]
for a sufficiently large universal constant \(C>0\).  Then the total regret
over the epochs is at least
\[
    \frac{T}{H}\sqrt{dH}
    \ge
    c'(dm)^{1/6}P_T^{1/3}T^{2/3}
\]
for a universal constant \(c'>0\).

For the hypercube construction, let
\(r:=\min\{d,\lfloor\log_2K\rfloor\}\) and use the finite action set
\(\{v/\sqrt r:v\in\{\pm1\}^r\}\) embedded in \(\mathbb R^d\).  By the
standard \(\Omega(r\sqrt H)\) lower bound for \(r\)-dimensional stochastic
linear bandits with rich action sets
\citep{dani2008stochastic,lattimore2020bandit}, this hypercube instance
has stationary epoch regret \(\Omega(r\sqrt H)\).  The path-length budget is
respected whenever
\[
    \frac{T}{H}\frac{r}{\sqrt H}
    \le
    cP_T .
\]
Choosing
\[
    H
    =
    \left\lceil
        C\left(\frac{rT}{P_T}\right)^{2/3}
    \right\rceil
\]
therefore gives total regret at least
\[
    \frac{T}{H}r\sqrt H
    \ge
    c''r^{2/3}P_T^{1/3}T^{2/3}
\]
for a universal constant \(c''>0\).
Taking the larger of the two constructions gives the claimed bound.
% \end{proof}

\end{APPENDICES}
%%%%%%%%%%%%%%%%%
\end{document}